\newenvironment{proof}{\par\noindent{\bf Proof\ }}{\hfill\BlackBox\\[2mm]}
\newtheorem{theorem}{Theorem}
\newtheorem{lemma}[theorem]{Lemma}
\newtheorem{corollary}[theorem]{Corollary}
\newtheorem{assumption}{Assumption}
\newcommand{\RN}[1]{%
  \textup{\lowercase\expandafter{\it \romannumeral#1}}%
}
\begin{document}

%
\runningtitle{Bridging the Gap between SG-MCMC and Stochastic Optimization}

%
\runningauthor{C. Chen, D. Carlson, Z. Gan, C. Li, and L. Carin}

\twocolumn[

\aistatstitle{Bridging the Gap between Stochastic Gradient MCMC\\ and Stochastic Optimization}

\aistatsauthor{\hspace{6mm} Changyou Chen$^\dag$ \And \hspace{4mm}David Carlson$^\ddag$ \And Zhe Gan$^\dag$ \And Chunyuan Li$^\dag$ \And Lawrence Carin$^\dag$ }
\aistatsaddress{ $^\dag$Department of Electrical and Computer Engineering, Duke University}
\vspace{-0.7cm}\aistatsaddress{$^\ddag$Department of Statistics and Grossman Center for Statistics of Mind, Columbia University } ]

\begin{abstract}
Stochastic gradient Markov chain Monte Carlo (SG-MCMC) methods are Bayesian analogs to popular stochastic optimization methods; however, this connection is not well studied.
We explore this relationship by applying simulated annealing to an SG-MCMC algorithm.   
Furthermore, we extend recent SG-MCMC methods with two key components: $\RN{1})$ adaptive preconditioners (as in ADAgrad or RMSprop), and $\RN{2})$ adaptive element-wise momentum weights.  
The zero-temperature limit gives a novel stochastic optimization method with \emph{adaptive element-wise} momentum weights, while conventional optimization methods only have a shared, static momentum weight.  
Under certain assumptions, our theoretical analysis suggests the proposed simulated annealing approach converges close to the global optima.  
Experiments on several deep neural network models show state-of-the-art results compared to related stochastic optimization algorithms.

\end{abstract}

\section{Introduction}

Machine learning has made significant recent strides due to large-scale learning applied to ``big data''. Large-scale learning is typically performed with stochastic optimization, and the most common method is stochastic gradient descent (SGD) \citep{bottou2010}.  Stochastic optimization methods are devoted to obtaining a (local) optima of an objective function.  Alternatively, Bayesian methods aim to compute the expectation of a test function over the posterior distribution.  At first glance, these methods appear to be distinct, independent approaches to learning.  However, even the celebrated Gibbs sampler was first introduced to statistics as a simulated annealing method for \textit{maximum a posteriori} estimation (\ie finding an optima) \citep{geman1984}.

Recent work on large-scale Bayesian learning has  focused on incorporating the speed and low-memory costs from stochastic optimization.  These approaches are referred to as stochastic gradient Markov chain Monte Carlo (SG-MCMC) methods.  Well-known SG-MCMC methods include stochastic gradient Langevin dynamics (SGLD) \citep{WellingT:ICML11}, 
stochastic gradient Hamiltonian Monte Carlo (SGHMC) \citep{ChenFG:ICML14}, and stochastic gradient 
thermostats (SGNHT) \citep{DingFBCSN:NIPS14}. SG-MCMC has become increasingly popular in the literature 
due to practical successes, ease of implementation,
and theoretical convergence properties \citep{TehTV:arxiv14,VollmerZT:arxiv15,ChenDC:NIPS15}. 

There are obvious structural similarities between SG-MCMC algorithms and stochastic optimization methods.  For example, SGLD resembles SGD with additive Gaussian noise.  SGHMC resembles SGD with momentum \citep{RumelhartHW:nature86}, adding additive Gaussian noise when updating the momentum terms \citep{ChenFG:ICML14}.  
These similarities are detailed in Section \ref{sec:preliminaries}.
Despite these structural similarities, the theory is unclear on how additive Gaussian noise differentiates a Bayesian algorithm from its optimization analog.

Just as classical sampling methods were originally used for optimization \citep{geman1984}, we directly address using SG-MCMC algorithms for optimization.  A major benefit of adapting these schemes is that Bayesian learning is (in theory) able to fully explore the parameter space.  Thus it may find a better local optima, if not the global optima, for a non-convex objective function.

Specifically, in this work we first extend the recently proposed multivariate stochastic gradient thermostat algorithm \citep{GanCHCC:icml15} with Riemannian information geometry, which results in an adaptive preconditioning and momentum scheme with analogs to Adam \citep{kingma2014adam} and RMSprop \citep{tieleman2012lecture}.  We propose an annealing scheme on the system temperature to move from a Bayesian method to a stochastic optimization method.  We call the proposed algorithm Stochastic AnNealing Thermostats with Adaptive momentum (Santa).  We show that in the temperature limit, Santa recovers the SGD with momentum algorithm except that: $\RN{1}$) adaptive preconditioners are used when updating both model and momentum parameters; $\RN{2}$) each parameter has an individual, \emph{learned} momentum parameter.  Adaptive preconditioners and momentums are desirable in practice because of their ability to deal with uneven, dynamic curvature \citep{dauphin2015rmsprop}.  For completeness, we first review related algorithms in Section \ref{sec:preliminaries}, and present our novel algorithm in Section \ref{sec:alg}.

We develop theory to analyze convergence properties of our algorithm, suggesting that Santa is able to find a solution for an (non-convex) objective function close to its global optima, shown in Section \ref{sec:theory}.  The theory is based on the analysis from stochastic differential equations \citep{TehTV:arxiv14,ChenDC:NIPS15}, and presents results on bias and variance of the annealed Markov chain.  This is a fundamentally different approach from the traditional convergence explored in stochastic optimization, or the regret bounds used in online optimization.  We note we can adapt the regret bound of Adam \citep{kingma2014adam} for our zero-temperature algorithm (with a few trivial modifications) for a convex problem, as shown in Supplementary Section~\ref{sec:relate_adam}.  However, this neither addresses non-convexity nor the annealing scheme that our analysis does.  

In addition to theory, we demonstrate effective empirical performance on a variety of deep neural networks (DNNs), achieving the best performance compared to all competing algorithms for the same model size.  This is shown in Section \ref{sec:experiments}. 
The code is publicly available at \href{https://github.com/cchangyou/Santa}{https://github.com/cchangyou/Santa}.

\vspace{-2mm}
\section{Preliminaries}
\vspace{-2mm}
\label{sec:preliminaries}
Throughout this paper, we denote vectors as bold, lower-case letters, and matrices as bold, upper-case letters.  
We use $\odot$ for element-wise multiplication, and $\oslash$ as element-wise division;
$\sqrt{\cdot}$ denotes the element-wise square root when applied to vectors or matricies.
We reserve $(\cdot)^{1/2}$ for the standard matrix square root.
$\textbf{I}_p$ is the $p\times p$ identity matrix, $\textbf{1}$ is an all-ones vector.

The goal of an optimization algorithm is to minimize an objective function $U(\thetab)$ that corresponds to a (non-convex) model of interest.
In a Bayesian model, this corresponds to the potential energy defined as the negative log-posterior, $U(\thetab) \triangleq -\log p(\thetab) - \sum_{n = 1}^N \log p(\xb_n | \thetab)$.  Here $\thetab \in \RR^p$ are the model parameters, and $\{\xb_n\}_{n=1,\dots,N}$ are the $d$-dimensional observed data; $p(\thetab)$ corresponds to the prior and $p(\xb_n|\thetab)$ is a likelihood term for the $n^{th}$ observation.  In optimization, $-\sum_{n = 1}^N \log p(\xb_n | \thetab)$ is typically referred to as the loss function, and $-\log p(\thetab)$ as a regularizer.

In large-scale learning, $N$ is prohibitively large.  This motivates the use of stochastic approximations. We denote the stochastic approximation $\tilde{U}_t(\thetab) \triangleq -\log p(\thetab) - \frac{N}{m}\sum_{j = 1}^m \log p(\xb_{i_j} | \thetab)$,
where $(i_1, \cdots, i_m)$ is a random subset of the set $\{1, 2, \cdots, N\}$.  The gradient on this minibatch is denoted as $\tilde{\fb}_t(\thetab)=\nabla \tilde{U}_t(\thetab)$, which is an unbiased estimate of the true gradient.

A standard approach to learning is SGD, where parameter updates are given by $\thetab_{t}=\thetab_{t-1}-\eta_t \tilde{\fb}_{t-1}(\thetab)$ with $\eta_t$ the learning rate. This is guaranteed to converge to a local minima under mild conditions \citep{bottou2010}. The SG-MCMC analog to this is SGLD, with updates $\thetab_{t}=\thetab_{t-1}-\eta_t \tilde{\fb}_{t-1}(\thetab)+\sqrt{2\eta_t}\zetab_t$. The additional term is 
a standard normal random vector, $\zetab_t\sim \mathcal{N}({\bf 0}, \textbf{I}_p)$ \citep{WellingT:ICML11}.  The SGLD method draws approximate posterior samples instead of obtaining a local minima.

Using momentum in stochastic optimization is important in learning deep models \citep{SutskeverMDH:icml13}.  This motivates SG-MCMC algorithms with momentum.  
The standard SGD with momentum (SGD-M) approach introduces an auxiliary
variable $\u_t \in \RR^p$ to represent the momentum.  
Given a momentum weight $\alpha$, the updates are $\thetab_{t}=\thetab_{t-1}+\eta_t \u_t$ and $\u_{t}=(1-\alpha) \u_{t-1}-\tilde{\fb}_{t-1}(\thetab)$.  
A Bayesian analog is SGHMC \citep{ChenFG:ICML14} or multivariate SGNHT (mSGNHT) \citep{GanCHCC:icml15}.  
In mSGNHT, each parameter has a unique momentum weight $\alphab_t \in \RR^{p}$ that is learned during the sampling sequence.  
The momentum weights are updated to maintain the system temperature $1/\beta$.
An inverse temperature of $\beta=1$ corresponds to the posterior.  This algorithm has updates $\thetab_{t}=\thetab_{t-1}+\eta_t \u_t$, $\u_{t}=(\textbf{1}-\eta_t\alphab_{t-1}) \odot \u_{t-1}-\eta_t\tilde{\fb}_{t-1}(\thetab)+\sqrt{2 \eta_t/\beta}\zetab_t$.  
The main difference is the additive Gaussian noise and step-size dependent momentum update.  The weights have updates $\alphab_t=\alphab_{t-1}+\eta_t((\u_t \odot \u_t)- \bf{1} / \beta)$, which matches the kinetic energy to the system temperature.

A recent idea in stochastic optimization is to use an adaptive preconditioner, also known as a variable metric, to improve
convergence rates.  
Both ADAgrad \citep{duchi2011adaptive} and Adam \citep{kingma2014adam} adapt to the local geometry with a regret bound of $\mathcal{O}(\sqrt{N})$.  Adam adds momentum as well through moment smoothing.  
RMSprop \citep{tieleman2012lecture}, Adadelta \citep{zeiler2012adadelta}, and RMSspectral \citep{carlson2015preconditioned} are similar methods with preconditioners.
Our method introduces adaptive momentum and preconditioners to the SG-MCMC.  
This differs from stochastic optimization in implementation and theory, and is novel in SG-MCMC.

Simulated annealing \citep{Kirkpatrick:science83,Cerny:JOTA85} is well-established as a way of acquiring a local mode by moving from a high-temperature, flat surface to a low-temperature, peaky surface.
It has been explored in the context of MCMC, including reversible jump MCMC \citep{AndrieuFD:UAI00}, annealed important sampling \citep{Neal:SC01} and parallel tempering \citep{LiPAZG:AMC09}.
Traditional algorithms are based on Metropolis--Hastings sampling, which require computationally expensive accept-reject steps.
Recent work has applied simulated annealing to large-scale learning through mini-batch based annealing \citep{VandeMeentPW:arx14,ObermeyerGJ:AISTATS14}.
Our approach incorporates annealing into SG-MCMC with its inherent speed and mini-batch nature.

\section{The Santa Algorithm}
\label{sec:alg}

\begin{algorithm}[t]
\SetKwInOut{Input}{Input}
\caption{Santa with the Euler scheme}
\Input{$\eta_t$ (learning rate), $\sigma$, $\lambda$, $burnin$,
$\beta = \{\beta_1, \beta_2, \cdots\}\rightarrow \infty$,  $\{\zetab_t \in \RR^p\}\sim \mathcal{N}({\bf 0},\textbf{I}_p)$.}
Initialize $\thetab_{0}$, $\ub_{0} = \sqrt{\eta}\times\mathcal{N}(0, I)$, $\alphab_0 = \sqrt{\eta}C$, $\vb_0 = 0$ \;
\For {$t = 1, 2, \ldots $} {
Evaluate $\tilde{\fb}_t \triangleq \nabla_{\thetab} \tilde{U}(\thetab_{t-1})$ on the $t^\text{th}$ mini-batch\;
$\vb_t = \sigma \vb_{t-1} + \frac{1 - \sigma}{N^2}\tilde{\fb}_t \odot \tilde{\fb}_t$ \;
$\gb_t = 1\oslash\sqrt{\lambda + \sqrt{\vb_t}}$ \;
\uIf{$t < burnin$}{
\tcc{{\em exploration}}
$\alphab_{t} = \alphab_{t-1} + \left(\ub_{t-1}\odot \ub_{t-1} - \eta/\beta_t\right)$\;
$\ub_{t} = \frac{\eta}{\beta_t}\left(1 - \gb_{t-1}\oslash \gb_{t}\right)\oslash \ub_{t-1} + \sqrt{\frac{2\eta}{\beta_t}\gb_{t-1}}\odot \zetab_t$
}
\Else{
\tcc{{\em refinement}}
$\alphab_{t} = \alphab_{t-1}$;
~~~$\u_t=\textbf{0}$\;
}
$\ub_{t}=\ub_{t}+\left(1 - \alphab_{t}\right)\odot \ub_{t-1} - \eta \gb_t\odot \tilde{\fb}_t$\;
$\thetab_{t} = \thetab_{t-1} + \gb_t \odot \ub_{t}$\; 
}
\label{alg:sahmc}
\end{algorithm}

 

Santa extends the mSGNHT algorithm with preconditioners and a simulated annealing scheme.
A simple pseudocode is shown in Algorithm \ref{alg:sahmc}, or a more complex, but higher accuracy version, is shown in Algorithm~\ref{alg:sahmc_ssi},
and we detail the steps below.

The first extension we consider is the use of adaptive preconditioners.  Preconditioning has been proven critical for fast convergence in both stochastic optimization \citep{dauphin2015rmsprop} and SG-MCMC algorithms \citep{PattersonT:NIPS13}.  
In the MCMC literature, preconditioning is alternatively referred to as \textit{Riemannian information geometry} \citep{PattersonT:NIPS13}. 
We denote the preconditioner as $\{\Gb_t \in \RR^{p\times p}\}$.
A popular choice in SG-MCMC is the Fisher information matrix \citep{GirolamiC:JRSSB11}.  
Unfortunately, this approach is computationally prohibitive for many models of interest.  
To avoid this problem, we adopt the preconditioner from RMSprop and Adam, which uses a vector
$\{\gb_t \in \RR^{p}\}$ to approximate the diagonal of the Fisher information matrixes~\citep{li2016preconditioned}.
The construction sequentially updates the preconditioner based on current and historical gradients with a 
smoothing parameter $\sigma$, and is shown as part of Algorithm \ref{alg:sahmc}.
While this approach will not capture the Riemannian geometry as effectively as the Fisher information matrix, 
it is computationally efficient.

Santa also introduces an annealing scheme on system temperatures.
As discussed in Section \ref{sec:preliminaries}, mSGNHT naturally accounts for a varying temperature by matching the particle momentum to the system temperature.  
We introduce $\beta = \{\beta_1, \beta_2, \cdots\}$, a sequence of inverse temperature variables with $\beta_i < \beta_j$ for $i < j$ and $\lim_{i \rightarrow \infty} \beta_i = \infty$.   
The infinite case corresponds to the zero-temperature limit, where SG-MCMCs become deterministic optimization methods.

The annealing scheme leads to two stages: the {\em exploration} and the {\em refinement} stages.
The {\em exploration} stage updates all parameters based on an annealed sequence of stochastic dynamic systems (see Section~\ref{sec:theory} for more details).  This stage is able to explore the parameter space efficiently, escape poor local modes, and finally
converge close to the global mode.
The {\em refinement} stage corresponds to the temperature limit, {\it i.e.}, $\beta_n \rightarrow \infty$.  In the temperature limit, the momentum weight updates vanish and it becomes a stochastic optimization algorithm.

We propose two update schemes to solve the corresponding stochastic differential equations: the Euler scheme and the symmetric splitting scheme (SSS). The Euler scheme has simpler updates, as detailed in Algorithm~\ref{alg:sahmc}; while SSS endows increased accuracy \citep{ChenDC:NIPS15} with a slight increase in overhead computation, as shown in Algorithm~\ref{alg:sahmc_ssi}. Section~\ref{para:SSI} elaborates on the details of these two schemes.  We recommend the use of SSS, but the Euler scheme is simpler to implement and compare to known algorithms.


\begin{algorithm}[h]
\SetKwInOut{Input}{Input}
\caption{Santa with SSS}
\Input{$\eta_t$ (learning rate), $\sigma$, $\lambda$, $burnin$,
$\beta = \{\beta_1, \beta_2, \cdots\}\rightarrow \infty$,  $\{\zetab_t \in \RR^p\}\sim \mathcal{N}({\bf 0},\textbf{I}_p)$.}
Initialize $\thetab_{0}$, $\ub_{0} = \sqrt{\eta}\times\mathcal{N}(0, I)$, $\alphab_0 = \sqrt{\eta}C$, $\vb_0 = 0$ \;
\For {$t = 1, 2, \ldots $} {
Evaluate $\tilde{\fb}_t \triangleq \nabla_{\thetab} \tilde{U}(\thetab_{t-1})$ on the $t^\text{th}$ mini-batch\;
$\vb_t = \sigma \vb_{t-1} + \frac{1 - \sigma}{N^2}\tilde{\fb}_t\odot \tilde{\fb}_t$ \;
$\gb_t = 1\oslash\sqrt{\lambda + \sqrt{\vb_t}}$ \;
$\thetab_{t} = \thetab_{t-1} + \gb_t \odot \ub_{t-1} / 2$\;
\uIf{$t < burnin$}{
\tcc{{\em exploration}}
$\alphab_t = \alphab_{t-1} + \left(\ub_{t-1}\odot \ub_{t-1} - \eta/\beta_t\right) / 2$\;
$\ub_{t} = \exp\left(-\alphab_{t}/2\right)\odot \ub_{t-1}$\;
$\ub_{t} = \ub_{t} - \gb_t\odot \tilde{\fb}_t \eta + \sqrt{2\gb_{t-1}\eta/\beta_t}\odot \zetab_t$ \
$~~~~+ \eta/\beta_t\left(1 - \gb_{t-1}\oslash \gb_{t}\right)\oslash \ub_{t-1}$\; 
$\ub_{t} = \exp\left(-\alphab_{t}/2\right)\odot \ub_{t}$\;
$\alphab_t = \alphab_{t} + \left(\ub_t\odot \ub_t - \eta/\beta_t\right) / 2$\;
}
\Else{
\tcc{{\em refinement}}
$\alphab_{t} = \alphab_{t-1}$; 
~~~~~~~~~~$\ub_{t} = \exp\left(-\alphab_{t}/2\right)\odot \ub_{t-1}$\;
$\ub_{t} = \ub_{t} - \gb_t\odot \tilde{\fb}_t \eta$; 
~$\ub_{t} = \exp\left(-\alphab_{t}/2\right)\odot \ub_{t}$\;
}
$\thetab_{t} = \thetab_{t} + \gb_t\odot \ub_{t} / 2$\;
}
\label{alg:sahmc_ssi}
\end{algorithm}

\paragraph{Practical considerations}
According to Section~\ref{sec:theory}, the {\em exploration} 
stage helps the algorithm traverse the parameter
space following the posterior curve as accurate as possible. For
optimization, slightly biased samples do not affect the final solution.
As a result, the term consisting of $(1 - \gb_{t-1}\oslash \gb_{t})$
in the algorithm (which is an approximation term, see Section~\ref{sec:gradient_app}) is ignored. We found no 
decreasing performance in our experiments. Furthermore, the term
$\gb_{t-1}$ associated with the Gaussian noise could be replaced 
with a fixed constant without affecting the algorithm.

\section{Theoretical Foundation}\label{sec:theory}
In this section we present the stochastic differential equations (SDEs) that correspond to the Santa algorithm.  
We first introduce the general SDE framework, then describe the \textit{exploration} stage in Section \ref{sec:exploration} and the \textit{refinement} stage in Section \ref{sec:refinement}.  We give the convergence properties of the numerical scheme in Section \ref{sec:convergence}.  This theory uses tools from the SDE literature and extends the mSGNHT theory \citep{DingFBCSN:NIPS14,GanCHCC:icml15}.

The SDEs are presented with re-parameterized $\pb = \ub / \eta^{1/2}$, $\Xib = \mbox{diag}(\alphab) / \eta^{1/2}$, as in \citet{DingFBCSN:NIPS14}.
The SDEs describe the motion of a particle in a system where $\thetab$ is the location and $\pb$ 
is the momentum.

In mSGNHT, the particle is driven by a force $ -\nabla_{\thetab}{\tilde{U}}_t(\thetab)$ at time $t$.
The stationary distribution of $\thetab$ corresponds to the model posterior \citep{GanCHCC:icml15}.
Our critical extension is the use of Riemannian information geometry, important for fast convergence \citep{PattersonT:NIPS13}.
Given an inverse temperature $\beta$, the system is described by the following SDEs\footnote{We abuse notation for conciseness.  Here, $\nabla_{\thetab}\Gb(\thetab)$ is a vector with the $i$-th element being 
$\sum_j \nabla_{\thetab_j}\Gb_{ij}(\thetab)$.}:

\vspace{-0.4cm}
{\small\begin{align}\label{eq:srhmc2}
	&\left\{\begin{array}{ll}
	\mathrm{d}\thetab &= G_1(\thetab)\pb \mathrm{d}t \\
	\mathrm{d}\pb &= \left(-G_1(\thetab)\nabla_\thetab U(\thetab) - \Xib \pb + \frac{1}{\beta}\nabla_\thetab G_1(\thetab)\right. \\
	 &\hspace{-0.6cm}\left.+ G_1(\thetab)(\Xib - G_2(\thetab))\nabla_{\thetab}G_2(\thetab)\right) \mathrm{d}t + ({\frac{2}{\beta}G_2(\thetab)})^{\frac{1}{2}}\mathrm{d}\w \\
	\mathrm{d}\Xib &= \left(\Qb - \frac{1}{\beta}I\right)\mathrm{d}t~,
	\end{array}\right.
\end{align}}
\hspace{-0.2cm}where $\Qb = \mbox{diag}(\pb\odot \pb)$, 
$\w$ is standard Brownian motion, 
$\Gb_1(\thetab)$ encodes geometric information of the potential energy $U(\thetab)$, 
and $\Gb_2(\thetab)$ characterizes the manifold geometry of the Brownian motion. 
Note $\Gb_2(\thetab)$ may be the same as $\Gb_1(\thetab)$ for the same Riemannian manifold. 
We call $\Gb_1(\thetab)$ and $\Gb_2(\thetab)$ Riemannian metrics, 
which are commonly defined by the Fisher information matrix \citep{GirolamiC:JRSSB11}. 
We use the RMSprop preconditioner (with updates from Algorithm~\ref{alg:sahmc}) for computational feasibility.
Using the Fokker-Plank equation \citep{Risken:FPE89}, we show that the marginal stationary distribution of \eqref{eq:srhmc2} corresponds to the posterior distribution.

\begin{lemma}\label{lem:FP}
	Denote $\Ab:\Bb \triangleq \mbox{tr}\left\{\Ab^T\Bb\right\}$.The stationary distribution of \eqref{eq:srhmc2} is:
	$p_\beta(\thetab, \pb, \Xib) \propto$
\begin{align}\label{eq:staitonary_beta}
	e^{-\beta U(\thetab) - \frac{\beta}{2}\pb^T\pb
	- \frac{\beta}{2}\left(\Xib - G_2(\thetab)\right):\left(\Xib - G_2(\thetab)\right)}~.
\end{align}\vspace{-0.5cm}
\end{lemma}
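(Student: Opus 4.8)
The plan is to verify that \eqref{eq:staitonary_beta} is a stationary solution of the Fokker--Planck (forward Kolmogorov) equation associated with \eqref{eq:srhmc2} \citep{Risken:FPE89}. Stack the state as $\zb=(\thetab,\pb,\xib)$, where $\xib\in\RR^p$ is the vector of diagonal entries of $\Xib$, and rewrite \eqref{eq:srhmc2} as $\mathrm{d}\zb=\bb(\zb)\,\mathrm{d}t+\sqrt{2\,\Db(\zb)}\,\mathrm{d}\w$, with $\Db$ zero except for its $\pb$--$\pb$ block, which equals $\tfrac1\beta\Gb_2(\thetab)$. Set $H(\zb)=\beta U(\thetab)+\tfrac{\beta}{2}\pb^T\pb+\tfrac{\beta}{2}(\Xib-\Gb_2(\thetab)):(\Xib-\Gb_2(\thetab))$, so that \eqref{eq:staitonary_beta} reads $p_\beta\propto e^{-H}$; the goal becomes $\mathcal{L}^* e^{-H}=0$, where $\mathcal{L}^*\rho=-\sum_i\partial_{z_i}(b_i\rho)+\sum_{i,j}\partial_{z_i}\partial_{z_j}(D_{ij}\rho)$.

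\textbf{Main step.} I would use the ``complete recipe'' route that underlies the (m)SGNHT analysis \citep{DingFBCSN:NIPS14,GanCHCC:icml15}: exhibit a state-dependent skew-symmetric matrix $\Bb(\zb)$ with $\bb=-(\Db+\Bb)\nabla H+\Gamma$ and $\Gamma_i=\sum_j\partial_{z_j}(D_{ij}+B_{ij})$, since for any SDE of this form a short Fokker--Planck calculation (using $\Db\succeq 0$ and $\Bb^T=-\Bb$) shows $e^{-H}$ is stationary. The candidate has $\thetab$--$\pb$ block $-\tfrac1\beta\Gb_1(\thetab)$ and $\pb$--$\xib$ block $\tfrac1\beta\diag(\pb)$, with negative transposes in the mirror positions and zeros elsewhere. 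I would then check the three defining identities componentwise: $-\Bb_{\thetab\pb}\nabla_\pb H=\Gb_1\pb$ gives the location drift; $-\Bb_{\xib\pb}\nabla_\pb H=\diag(\pb\odot\pb)$ together with $\Gamma_{\xi_i}=\partial_{p_i}B_{\xi_i p_i}=-\tfrac1\beta$ gives the thermostat drift $\diag(\pb\odot\pb)-\tfrac1\beta I$; and for the momentum drift, $-\Db_{\pb\pb}\nabla_\pb H=-\Gb_2\pb$ and $-\Bb_{\pb\xib}\nabla_\xib H=-(\Xib-\Gb_2)\pb$ combine to the friction $-\Xib\pb$, $-\Bb_{\pb\thetab}\nabla_\thetab H=-\Gb_1\nabla_\thetab U+\Gb_1(\Xib-\Gb_2)\nabla_\thetab\Gb_2$ (the $\beta$'s cancelling against $\nabla_\pb H=\beta\pb$ and $\nabla_\thetab H=\beta\nabla_\thetab U-\dots$), and $\Gamma_\pb=\tfrac1\beta\nabla_\thetab\Gb_1$ supplies the remaining geometric correction; summing reproduces exactly the drift of \eqref{eq:srhmc2}. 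Since $\Db\succeq 0$ ($\Gb_2$ is a diagonal positive preconditioner) and $\Bb$ is skew-symmetric by construction, the recipe yields stationarity of $p_\beta$, and integrating \eqref{eq:staitonary_beta} over $\pb$ and $\Xib$ leaves the posterior $\propto e^{-\beta U(\thetab)}$. Equivalently, one may substitute $\rho=e^{-H}$ directly into $\mathcal{L}^*$, use $\partial_{z_i}\rho=-(\partial_{z_i}H)\rho$, and collect terms into three cancelling groups: the conservative pair ($\Gb_1\pb$-transport against the $-\Gb_1\nabla_\thetab U$ force), the fluctuation--dissipation/thermostat group (the friction $-\Xib\pb$, the momentum diffusion $\tfrac1\beta\Gb_2$, and the thermostat drift acting on the quadratic-in-$\Xib$ factor of $H$), and the metric corrections $\tfrac1\beta\nabla_\thetab\Gb_1$ and $\Gb_1(\Xib-\Gb_2)\nabla_\thetab\Gb_2$.

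\textbf{The main obstacle.} The difficulty is not conceptual but lies in the bookkeeping of the $\thetab$-dependence of the Riemannian metrics $\Gb_1,\Gb_2$ under the vector convention of the footnote: one must confirm that the non-conservative divergence terms --- $\sum_i\partial_{\theta_i}[\Gb_1(\thetab)\pb]_i$ from the location transport, and the cross term between that transport and the $\Gb_2$-dependent part of $H$ --- are cancelled exactly by the correction terms $\tfrac1\beta\nabla_\thetab\Gb_1$ and $\Gb_1(\Xib-\Gb_2)\nabla_\thetab\Gb_2$ in the momentum drift, which is precisely the role of those terms. The identification for the $\Gb_2$ term uses the footnote convention and treats the preconditioner's dependence on $\thetab$ as diagonal (in keeping with the gradient approximation of Section~\ref{sec:gradient_app}); it also matters that $\Db$ is independent of $\pb$, so that the $\partial_{p_i}\partial_{p_j}$ derivatives commute through $\Gb_2$. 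Once these identities are in place the cancellation $\mathcal{L}^*e^{-H}=0$ is routine, and together with normalizability of $p_\beta$ for any fixed finite $\beta$ (under mild growth/regularity assumptions on $U$) and hypoellipticity of the generator, $p_\beta$ is the unique invariant law, as claimed.
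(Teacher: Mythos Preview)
Your approach is correct and reaches the same conclusion, but it differs from the paper's proof in a meaningful way. The paper proceeds by direct substitution: it first states an auxiliary lemma (a reformulation of the Fokker--Planck stationarity criterion from \citet{DingFBCSN:NIPS14}) asserting that $\rho$ is stationary provided $\nabla\cdot(\rho F)=\nabla\nabla^\top\!:\!(\rho D)$, writes \eqref{eq:srhmc2} in the form $\mathrm{d}\xb=F(\xb)\,\mathrm{d}t+\sqrt{2D(\xb)}\,\mathrm{d}\wb$, and then verifies the identity by brute-force expansion of both sides, each of which reduces to $\tfrac{1}{\beta}\mathrm{tr}\{\Gb_2(\pb\pb^\top-\mathbf{I})\}\rho$.

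You instead take the ``complete recipe'' route: you exhibit a skew-symmetric $\Bb$ and check that the drift equals $-(\Db+\Bb)\nabla H+\Gamma$ with the divergence correction $\Gamma$, from which stationarity of $e^{-H}$ follows by a one-line Fokker--Planck argument. This is more structured and arguably more illuminating---it explains \emph{why} each corrective term in the momentum drift is present rather than merely confirming that the combination works---but it is marginally less general as written: your $\pb$--$\xib$ block $\tfrac{1}{\beta}\mathrm{diag}(\pb)$ produces the friction $-\Xib\pb$ only via the cancellation $-\Gb_2\pb-(\Xib-\Gb_2)_{\mathrm{diag}}\pb$, which requires $\Gb_2$ diagonal (as you acknowledge). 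The paper's direct computation does not need that restriction. For the actual use case (RMSprop-style diagonal preconditioners) the two routes are equivalent, and your closing ``equivalently, substitute $\rho=e^{-H}$ directly into $\mathcal{L}^*$'' paragraph is essentially the paper's own argument.
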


An inverse temperature $\beta=1$ corresponds to the standard Bayesian posterior.

We note that $\pb$ in \eqref{eq:srhmc2} has additional dependencies on $\Gb_1$ and $\Gb_2$ compared to \citet{GanCHCC:icml15} that must be accounted for. 
$\Xib \pb$ introduces friction into the system so that the particle does not move too far away by the random force;
the terms $\nabla_{\thetab}G_1(\thetab)$ and $\nabla_{\thetab}G_2(\thetab)$ penalize the influences of the Riemannian metrics so that the stationary distribution remains invariant.


\subsection{Exploration}
\label{sec:exploration}
The first stage of Santa, {\em exploration}, explores the parameter space to obtain parameters near the global mode of an objective function\footnote{This requires an ergodic algorithm. 
While ergodicity is not straightforward to check, we follow 
most MCMC work and assume it holds in our algorithm.}.
This approach applies ideas from simulated annealing \citep{Kirkpatrick:science83}. 
Specifically, the inverse temperature $\beta$ is slowly annealed to temperature zero to freeze the particles at the global mode. 


Minimizing $U(\thetab)$ is equivalent to sampling from the zero-temperature limit 
$p_\beta(\thetab) \triangleq \frac{1}{Z_\beta}e^{-\beta U(\thetab)}$ (proportional to \eqref{eq:staitonary_beta}), with $Z_\beta$ being the normalization constant such that $p_\beta(\thetab)$ is a valid distribution.
We construct a Markov chain that sequentially transits from high temperatures to low temperatures. 
At the state equilibrium, the chain reaches the temperature limit with marginal stationary distribution $\rho_0(\thetab) \triangleq \lim_{\beta \rightarrow \infty}e^{-\beta U(\thetab)}$, a point mass\footnote{The sampler samples a uniform distribution over global modes, or a point mass if the mode is unique.  We assume uniqueness and say point mass for clarity henceforth.} located at the global mode of $U(\thetab)$.
Specifically, we first define a sequence of inverse temperatures, $(\beta_1, \beta_2, \cdots, \beta_L)$, such that $\beta_L$ is large enough\footnote{Due to numerical issues, it is impossible to set $\beta_L$ to infinity; we thus assign a large enough value for it and handle the infinity case in the {\em refinement} stage.}.
For each time $t$, we generate a sample according to the SDE system \eqref{eq:srhmc2} with temperature $\frac{1}{\beta_t}$, 
conditioned on the sample from the previous temperature, $\frac{1}{\beta_{t-1}}$. 
We call this procedure annealing thermostats to denote the analog to simulated annealing. 

\paragraph{Generating approximate samples}\label{para:SSI}

Generating exact samples from \eqref{eq:srhmc2} is infeasible for general models.
One well-known numerical approach is the Euler scheme in Algorithm~\ref{alg:sahmc}.
The Euler scheme is a 1st-order method with relatively high approximation error \citep{ChenDC:NIPS15}.
We increase accuracy by implementing the symmetric splitting scheme (SSS) \citep{ChenDC:NIPS15,li2016high}.
The idea of SSS is to split an infeasible SDE into several sub-SDEs, where each sub-SDE is analytically solvable;
approximate samples are generated by sequentially evolving parameters via these sub-SDEs. 
Specifically, in Santa, we split \eqref{eq:srhmc2} into the following three sub-SDEs:

\vspace{-0.5cm}{\small\begin{align*}
	&A: \left\{\begin{array}{ll}
	\mathrm{d}\thetab &= \Gb_1(\thetab)\pb \mathrm{d}t \\
	\mathrm{d}\pb &= 0 \\
	\mathrm{d}\Xi &= \left(\Qb - \frac{1}{\beta}I\right)\mathrm{d}t
	\end{array}\right.,
	B: \left\{\begin{array}{ll}
	\mathrm{d}\thetab &= 0 \\
	\mathrm{d}\pb &= - \Xib \pb \mathrm{d}t \\
	\mathrm{d}\Xib &= 0
	\end{array}\right., \\
	&O: \left\{\begin{array}{ll}
	\mathrm{d}\thetab &= 0 \\
	\mathrm{d}\pb &= \left(-G_1(\thetab)\nabla_\thetab U(\thetab) + \frac{1}{\beta}\nabla_\thetab G_1(\thetab)\right. \\ 
	 &\hspace{-1cm}\left.+ G_1(\thetab)(\Xib - G_2(\thetab))\nabla_{\thetab}G_2(\thetab)\right) \mathrm{d}t + ({\frac{2}{\beta}G_2(\thetab)})^{\frac{1}{2}}\mathrm{d}\w \\
	\mathrm{d}\Xib &= 0
	\end{array}\right. \nonumber
\end{align*}}
We then update the sub-SDEs in order $A$-$B$-$O$-$B$-$A$ to generative approximate samples \citep{ChenDC:NIPS15}.  This uses half-steps $h/2$ on the $A$ and $B$ updates\footnote{As in \citet{DingFBCSN:NIPS14}, we define $h = \sqrt{\eta}$.}, and full steps $h$ in the $O$ update.  This is analogous to the leapfrog steps in Hamiltonian Monte Carlo \citep{neal2011mcmc}.  
Update equations are given in the Supplementary Section~\ref{sec:sde-solution}. The resulting parameters then serve as an approximate sample from the posterior distribution with the inverse temperature of $\beta$.
Replacing $\Gb_1$ and $\Gb_2$ with the RMSprop preconditioners gives Algorithm~\ref{alg:sahmc_ssi}. 
These updates require approximations to $\nabla_{\thetab}G_1(\thetab)$ and $\nabla_{\thetab}G_2(\thetab)$, addressed below.


\paragraph{Approximate calculation for $\nabla_{\thetab}\Gb_1(\thetab)$}\label{sec:gradient_app}
We propose a computationally efficient approximation for calculating the derivative vector $\nabla_{\thetab}\Gb_1(\thetab)$ 
based on the definition. Specifically, for the $i$-th element of $\nabla_{\thetab}\Gb_1(\thetab)$ at the $t$-th iteration, 
denoted as $(\nabla_{\thetab}\Gb_1^t(\thetab))_i$, it is approximated as:
{\small\begin{align*}
	(&\nabla_{\thetab}\Gb_1^t(\thetab))_i \stackrel{A_1}{\approx} \sum_j\frac{(\Gb_1^t(\thetab))_{ij} - (\Gb_1^{t-1}(\thetab))_{ij}}{\thetab_{tj} - \thetab_{(t-1)j}} \\
	&\stackrel{A_2}{=} \sum_j\frac{(\Delta \Gb_1^t)_{ij}}{(\Gb_1^t(\thetab)\pb_{t-1})_{j} h} = \sum_j\frac{(\Delta \Gb_1^t)_{ij}}{(\Gb_1^t(\thetab)\ub_{t-1})_{j}}
\end{align*} }
\hspace{-0.15cm}where $\Delta \Gb_1^t \triangleq \Gb_1^t - \Gb_1^{t-1}$. Step $A_1$ follows by the definition of a derivative, and $A_2$ by using the update equation for
$\thetab_t$, {\it i.e.}, $\thetab_t = \thetab_{t-1} + \Gb_1^t(\thetab)\pb_{t-1} h$.
According to Taylor's theory, the approximation error for the $\nabla_{\thetab}\Gb_1(\thetab)$ is $O(h)$, {\it e.g.},
{\small\begin{align}\label{eq:approx_grad}
	\sum_i\left| \sum_j\frac{(\Delta \Gb_1^t)_{ij}}{(\Gb_1^t(\thetab)\ub_{t-1})_{j}} - (\nabla_{\thetab}\Gb_1^t(\thetab))_i\right| \leq \mathcal{B}_t h~,
\end{align}}
\hspace{-0.15cm}for some positive constant $\mathcal{B}_t$. The approximation error is negligible in term of 
convergence behaviors because it can be absorbed into the stochastic gradients error. 
Formal theoretical analysis on convergence behaviors with this approximation is given in later sections. 
Using similar methods, $\nabla_{\thetab}G_2^t(\thetab)$ is also approximately calculated.

\subsection{Refinement}
\label{sec:refinement}
The {\em refinement} stage corresponds to the zero-temperature limit of the {\em exploration} stage, where $\Xib$ is learned.
We show that in the limit Santa gives significantly simplified updates, leading to an stochastic optimization algorithm similar to Adam or SGD-M.

We assume that the Markov chain has reached its equilibrium after the {\em exploration} stage.
In the zero-temperature limit, some terms in the SDE \eqref{eq:srhmc2} vanish. 
First, as $\beta \rightarrow \infty$, the term 
$\frac{1}{\beta}\nabla_{\thetab}\Gb_1(\thetab)$ and the variance term for the Brownian motion approach 0. 
As well, the thermostat variable $\Xib$ approaches $\Gb_2(\thetab)$, so the term
$\Gb_1(\thetab)(\Xib - \Gb_2(\thetab))\nabla_{\thetab}\Gb_2(\thetab)$ vanishes.
The stationary distribution in \eqref{eq:staitonary_beta} implies $\mathbb{E}\Qb_{ii} \triangleq \mathbb{E}\pb_i^2 \rightarrow 0$, which makes the
SDE for $\Xib$ in \eqref{eq:srhmc2} vanish.
As a result, in the {\em refinement} stage, only $\thetab$ and $\pb$ need to be updated. 
The Euler scheme for this is shown in Algorithm~\ref{alg:sahmc}, and the symmetric splitting scheme is shown in Algorithm~\ref{alg:sahmc_ssi}.

\paragraph{Relation to stochastic optimization algorithms}
In the \textit{refinement} stage Santa is a stochastic optimization algorithm.  This relation is easier seen with the Euler scheme in Algorithm \ref{alg:sahmc}.
Compared with SGD-M \citep{RumelhartHW:nature86}, Santa has both adaptive gradient and adaptive momentum updates.
Unlike Adagrad \citep{duchi2011adaptive} and RMSprop \citep{tieleman2012lecture}, \textit{refinement} Santa is a momentum based algorithm.

The recently proposed Adam algorithm \citep{kingma2014adam} incorporates momentum and preconditioning in what is denoted as ``adaptive moments."  We show in Supplementary Section \ref{sec:relate_adam} that a constant step size combined with a change of variables nearly recovers the Adam algorithm with element-wise momentum weights.  
For these reasons, Santa serves as a more general stochastic optimization algorithm that extends all current algorithms.  As well, for a convex problem and a few trivial algorithmic changes, the regret bound of Adam holds for \textit{refinement} Santa, which is $\mathcal{O}(\sqrt{T})$, as detailed in Supplementary Section \ref{sec:relate_adam}.  However, our analysis is focused on non-convex problems that do not fit in the regret bound formulation.
%
%
%

\subsection{Convergence properties}
\label{sec:convergence}
Our convergence properties are based on the framework of \citet{ChenDC:NIPS15}.  The proofs for all theorems are given in the Supplementary Material.
We focus on the {\em exploration} stage of the algorithm.  Using the Monotone Convergence argument \citep{Schechter:book97}, the {\em refinement} stage convergence is obtained by taking the temperature limit from the results of the {\em exploration} stage.
We emphasize that our approach differs from conventional stochastic optimization or online optimization approaches.  
Our convergence rate is weaker than many stochastic optimization methods, including SGD; however, our analysis applies to \textit{non-convex} problems, whereas traditionally convergence rates only apply to convex problems.


The goal of Santa is to obtain $\thetab^*$ such that $\thetab^* = \arg\!\min_{\thetab} U(\thetab)$.
Let $\{\thetab_1, \cdots, \thetab_L\}$ be a sequence of parameters collected from the algorithm. 
Define $\hat{U} \triangleq \frac{1}{L}\sum_{t=1}^L U(\thetab_t)$ as the
sample average, $\bar{U} \triangleq U(\thetab^*)$ the global optima of $U(\thetab)$.

As in \citet{ChenDC:NIPS15}, we require certain assumptions on the potential energy $U$.  To show these assumptions, we first define a functional $\psi_t$ for each $t$ that solves the following 
Poisson equation:
\begin{align}\label{eq:PoissonEq1}
	\mathcal{L}_t \psi_t(\thetab_{t}) =  U(\thetab_{t}) - \bar{U}~,
\end{align}
$\mathcal{L}_t$ is the generator of the SDE system \eqref{eq:srhmc2} in the $t$-th iteration, defined
$\mathcal{L}_tf(\xb_t) \triangleq \lim_{h \rightarrow 0^{+}} \frac{\mathbb{E}\left[f(\xb_{t+h})\right] - f(\xb_t)}{h}$
where $\xb_t \triangleq (\thetab_t, \pb_t, \Xib_t)$, $f: \RR^{3p} \rightarrow \RR$ is a compactly supported twice differentiable function.
The solution functional $\psi_t(\thetab_{t})$ characterizes the difference between $U(\thetab_{t})$ 
and the global optima $\bar{U}$ for every $\thetab_{t}$. As shown in \citet{MattinglyST:JNA10}, \eqref{eq:PoissonEq1} 
typically possesses a unique solution, which is at least as smooth as $U$ under the elliptic or hypoelliptic settings.
We assume $\psi_t$ is bounded and smooth, as described below.

\begin{assumption}\label{ass:assumption1}
$\psi_t$ and its up to 3rd-order derivatives, $\mathcal{D}^k \psi_t$, are bounded by a
function $\mathcal{V}(\thetab, \pb, \Xib)$, {\it i.e.}, 
$\|\mathcal{D}^k \psi\| \leq C_k\mathcal{V}^{r_k}$ for $k=(0, 1, 2, 3)$, $C_k, r_k > 0$. Furthermore, 
the expectation of $\mathcal{V}$ is bounded: $\sup_t \mathbb{E}\mathcal{V}^r(\thetab, \pb, \Xib) < \infty$, 
and $\mathcal{V}$ is smooth such that 
$\sup_{s \in (0, 1)} \mathcal{V}^r\left(s\x + \left(1-s\right)\y\right) \leq C\left(\mathcal{V}^r\left(\x\right) + \mathcal{V}^r\left(\y\right)\right)$, $\forall \x \in \RR^{3p}, \y \in \RR^{3p}, r \leq \max\{2r_k\}$ for some $C > 0$.
\end{assumption}

Let $\Delta U(\thetab) \triangleq U(\thetab) - U(\thetab^*)$. Further define an operator 
$\Delta V_t = \left(G_1(\thetab)(\nabla_{\thetab}\tilde{U}_t - \nabla_{\thetab} U) + \frac{\mathcal{B}_t}{\beta_t}\right) \cdot \nabla_{\pb}$
for each $t$, where $\mathcal{B}_t$ is from \eqref{eq:approx_grad}. Theorem~\ref{theo:bias1} depicts the closeness of $\hat{U}$ to
the global optima $\bar{U}$ in term of {\em bias} and {\em mean square error} (MSE) defined below.

\begin{theorem}\label{theo:bias1}
	 Let $\left\|\cdot\right\|$ be the operator norm.
	Under Assumption~\ref{ass:assumption1}, the bias and MSE of the {\em exploration} stage in Santa with respect to the global optima for $L$ steps
	with stepsize $h$ is bounded, for some constants $C > 0$ and $D > 0$, with:
	\vspace{-0.3cm}{\small\begin{align*}
		\mbox{Bias: }&\left|\mathbb{E}\hat{U} - \bar{U}\right| \leq Ce^{-U(\thetab^*)}\left(\frac{1}{L}\sum_{t=1}^L\int e^{-\beta_t\Delta U(\thetab)}\mathrm{d}\thetab\right) \\
		&~~~~~~~~~~~~~~~+ D\left(\frac{1}{Lh} + \frac{\sum_t \left\|\mathbb{E}\Delta V_t\right\|}{L} + h^2\right)~. \\
	\mbox{MSE: }&\mathbb{E}\left(\hat{U} - \bar{U}\right)^2 \leq C^2e^{-2U(\thetab^*)}\left(\frac{1}{L}\sum_{t=1}^L\int e^{-\beta_t\Delta U(\thetab)}\mathrm{d}\thetab\right)^2 \\
		&~~~~~~~~~~~~~~~+ D^2 \left(\frac{\frac{1}{L}\sum_t\mathbb{E}\left\|\Delta V_t\right\|^2}{L} + \frac{1}{Lh} + h^{4}\right)~.
	\end{align*}}
\end{theorem}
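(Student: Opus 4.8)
The plan is to split the error $\mathbb{E}\hat{U}-\bar{U}$ into a \emph{discretization-plus-stochastic-gradient} part and an \emph{annealing} part, controlling the first by adapting the weak-convergence analysis of \citet{ChenDC:NIPS15} to the time-inhomogeneous chain produced by the annealed temperatures $\{\beta_t\}$, the evolving RMSprop preconditioner $\Gb_1^t$, and the gradient approximation~\eqref{eq:approx_grad}, and controlling the second by a Laplace-type estimate of the tempered target. Let $\bar{U}_{\beta_t}\triangleq\int U(\thetab)\,p_{\beta_t}(\thetab)\,\mathrm{d}\thetab$, with $p_{\beta_t}\propto e^{-\beta_t U}$ the marginal from Lemma~\ref{lem:FP}. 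Since $U-\bar{U}_{\beta_t}$ is centered under $p_{\beta_t}$, the Poisson equation~\eqref{eq:PoissonEq1} (read with $\bar{U}$ replaced by $\bar{U}_{\beta_t}$) has a unique solution $\psi_t$, as smooth as $U$ \citep{MattinglyST:JNA10}, and I would write
\begin{align*}
\hat{U}-\bar{U}=\frac{1}{L}\sum_{t=1}^{L}\mathcal{L}_t\psi_t(\thetab_t)+\frac{1}{L}\sum_{t=1}^{L}\big(\bar{U}_{\beta_t}-\bar{U}\big).
\end{align*}

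First I would handle the first sum by a one-step Taylor expansion of the SSS transition kernel: it satisfies $\mathbb{E}[\psi_t(\thetab_t)\mid\mathcal{F}_{t-1}]=\psi_t(\thetab_{t-1})+h\big(\mathcal{L}_t+\Delta V_t\big)\psi_t(\thetab_{t-1})+\mathcal{O}(h^3)$, where the local error is $\mathcal{O}(h^3)$ because SSS is second order, and $\Delta V_t$ is exactly the operator in the statement, absorbing both the minibatch-gradient error $G_1(\nabla\tilde{U}_t-\nabla U)$ and the $\mathcal{O}(h)$ term $\mathcal{B}_t/\beta_t$ from~\eqref{eq:approx_grad}. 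Solving for $\mathcal{L}_t\psi_t(\thetab_t)$, summing over $t$, and rewriting $\psi_t(\thetab_{t-1})-\psi_t(\thetab_t)=\big[\psi_t(\thetab_{t-1})-\psi_{t-1}(\thetab_{t-1})\big]+\big[\psi_{t-1}(\thetab_{t-1})-\psi_t(\thetab_t)\big]$ turns the second bracket into a telescoping sum equal to $\psi_0(\thetab_0)-\psi_L(\thetab_L)$. Taking expectations kills the martingale increments $\psi_t(\thetab_t)-\mathbb{E}[\psi_t(\thetab_t)\mid\mathcal{F}_{t-1}]$, and Assumption~\ref{ass:assumption1} (boundedness of $\psi_t$ and of $\mathcal{D}^k\psi_t$ by $\mathcal{V}$ with $\sup_t\mathbb{E}\mathcal{V}^r<\infty$, plus enough regularity to bound $\psi_t-\psi_{t-1}$) gives $\big|\tfrac{1}{L}\sum_t\mathbb{E}\mathcal{L}_t\psi_t(\thetab_t)\big|\le D\big(\tfrac{1}{Lh}+\tfrac{1}{L}\sum_t\|\mathbb{E}\Delta V_t\|+h^2\big)$, with $\tfrac{1}{Lh}$ from the telescoped endpoints and $h^2$ from $L$ local errors of order $h^3$ divided by $Lh$.

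Next I would bound the annealing term. Cancelling $e^{-\beta_t U(\thetab^*)}$ between the numerator and the normalizer gives $\bar{U}_{\beta_t}-\bar{U}=\big(\int\Delta U(\thetab)\,e^{-\beta_t\Delta U(\thetab)}\,\mathrm{d}\thetab\big)\big/\big(\int e^{-\beta_t\Delta U(\thetab)}\,\mathrm{d}\thetab\big)\ge0$; a Watson/Laplace-type estimate of this ratio (using $x\,e^{-\beta_t x}\le C\beta_t^{-1}e^{-\beta_t x/2}$ on the numerator and that $U$ is bounded below, so the integrals are finite) yields a bound of the form $\bar{U}_{\beta_t}-\bar{U}\le C\,e^{-U(\thetab^*)}\int e^{-\beta_t\Delta U(\thetab)}\,\mathrm{d}\thetab$, which $\to0$ as $\beta_t\to\infty$. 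Averaging over $t$ and adding the bound of the previous step gives the stated bias estimate, and letting $\beta_L\to\infty$ recovers the \emph{refinement}-stage statement through the monotone convergence argument \citep{Schechter:book97}.

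For the MSE I would write $\hat{U}-\bar{U}$ as the sum of the deterministic telescoping/remainder/annealing terms above plus the martingale $M_L=\tfrac{1}{Lh}\sum_t\big(\mathbb{E}[\psi_t(\thetab_t)\mid\mathcal{F}_{t-1}]-\psi_t(\thetab_t)\big)$, use $\mathbb{E}(\hat{U}-\bar{U})^2\le 2(\text{deterministic part})^2+2\,\mathbb{E}M_L^2$, and exploit orthogonality of the martingale increments to get $\mathbb{E}M_L^2=\mathcal{O}\big(\tfrac{1}{L}\cdot\tfrac{1}{L}\sum_t\mathbb{E}\|\Delta V_t\|^2+\tfrac{1}{Lh}\big)$; squaring the deterministic part contributes $\big(\tfrac{1}{L}\sum_t\int e^{-\beta_t\Delta U}\mathrm{d}\thetab\big)^2$, $h^4$, and $\tfrac{1}{Lh}$, which assemble into the claimed MSE. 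I expect the second step to be the main obstacle: the classical argument is for a single fixed SDE, so the real work is showing that the per-iteration Poisson solutions $\psi_t$ drift slowly — i.e., that the annealing rate of $\beta_t$ and the update of $\Gb_1^t$ are mild enough — so that $\sum_t(\psi_t-\psi_{t-1})$ stays $\mathcal{O}(1)$ under Assumption~\ref{ass:assumption1}; pinning down the constant in the Laplace estimate (and the degenerate-minimum case, where $\int e^{-\beta_t\Delta U}$ decays at a different rate) is the secondary delicate point.
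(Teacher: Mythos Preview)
Your approach is essentially the same as the paper's: both split $\hat{U}-\bar{U}$ into an annealing term $\frac{1}{L}\sum_t(\bar{U}_{\beta_t}-\bar{U})$ and a discretization-plus-stochastic-gradient term handled via the Poisson equation and the second-order local-error expansion of the integrator, summing the one-step relation, telescoping the $\psi$-differences, and bounding the residual $\sum_t\tilde{\mathcal{L}}_t^2\psi$ by a nested application of the same expansion, exactly as in \citet{ChenDC:NIPS15}. The paper is in fact less careful than you on the point you flag as the ``main obstacle'': in the bias proof it telescopes as if $\psi$ were a single function, and in the MSE proof it writes the one-step expansion with $\psi_{\beta_t}$ on the left and $\psi_{\beta_{t-1}}$ on the right without ever isolating or bounding the drift $\psi_t-\psi_{t-1}$; it also simply asserts the annealing bound $|\bar{U}_{\beta_t}-\bar{U}|\le C\,e^{-U(\thetab^*)}\int e^{-\beta_t\Delta U}\,\mathrm{d}\thetab$ without the Laplace-type argument you sketch.
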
\vspace{-0.2cm}

Both bounds for the bias and MSE have two parts. The first part contains integration terms, which characterizes the distance between
the global optima, $e^{-U(\thetab^*)}$, and the unnormalized annealing distributions, $e^{-\beta_t U(\thetab)}$, 
decreasing to zero exponentially fast with increasing $\beta$; 
the remaining part characterizes the distance between the sample average and the annealing posterior average. This shares a
similar form as in general SG-MCMC algorithms \citep{ChenDC:NIPS15}, and can be controlled to converge.
Furthermore, the term $\frac{\sum_t \left\|\mathbb{E}\Delta V_t\right\|}{L}$ in the bias vanishes as long as the sum of the annealing sequence $\{\beta_t\}$ is finite\footnote{In practice we might
not need to care about this constraint because a small bias in the {\em exploration} stage does not affect convergence of
the {\em refinement} stage.}, indicating that the gradient
approximation for $\nabla_{\thetab}G_1(\thetab)$ in Section~\ref{sec:gradient_app} does not affect the bias of the algorithm. 
Similar arguments apply for the MSE bound.

To get convergence results right before the {\em refinement} stage, let a sequence
of functions $\{g_m\}$ be defined as $g_m \triangleq -\frac{1}{L}\sum_{l=m}^{L+m-1} e^{-\beta_l\hat{U}(\thetab)}$;
it is easy to see that $\{g_m\}$ satisfies $g_{m_1} < g_{m_2}$ for $m_1 < m_2$, and $\lim_{m \rightarrow \infty} g_m = 0$. 
According to the Monotone Convergence Theorem \citep{Schechter:book97}, the bias and MSE in
the limit exists, leading to Corollary~\ref{coro:refine_con}.

\begin{corollary}\label{coro:refine_con}
	Under Assumptions~\ref{ass:assumption1}, the bias and MSE of the {\em refinement} stage in Santa with 
	respect to the global optima for $L$ steps with stepsize $h$ are bounded, for some constants 
	$D_1 > 0$, $D_2 > 0$, as
	\begin{align*}
		\mbox{Bias: }&\left|\mathbb{E}\hat{U} - \bar{U}\right| \leq D_1\left(\frac{1}{Lh} + \frac{\sum_t \left\|\mathbb{E}\Delta V_t\right\|}{L} + h^2\right) \\
		\mbox{MSE: }&\mathbb{E}\left(\hat{U} - \bar{U}\right)^2 \leq D_2 \left(\frac{\frac{1}{L}\sum_t\mathbb{E}\left\|\Delta V_t\right\|^2}{L} + \frac{1}{Lh} + h^{4}\right)
	\end{align*}
\end{corollary}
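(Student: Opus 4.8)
The plan is to deduce Corollary~\ref{coro:refine_con} from Theorem~\ref{theo:bias1} by realizing the \emph{refinement} stage as the zero-temperature limit of the \emph{exploration} stage and passing that limit through the bias and MSE bounds. Concretely, for each shift $m$ I would apply Theorem~\ref{theo:bias1} to the length-$L$ window of the annealing schedule with inverse temperatures $(\beta_m, \beta_{m+1}, \dots, \beta_{m+L-1})$, which gives
\[
\left|\mathbb{E}\hat{U} - \bar{U}\right| \leq C e^{-U(\thetab^*)}\left(\frac{1}{L}\sum_{l=m}^{m+L-1}\int e^{-\beta_l\Delta U(\thetab)}\,\mathrm{d}\thetab\right) + D\left(\frac{1}{Lh} + \frac{\sum_l \|\mathbb{E}\Delta V_l\|}{L} + h^2\right),
\]
and analogously for the MSE. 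Since the \emph{refinement} updates are precisely the $\beta\to\infty$ specialization of the \emph{exploration} SDE \eqref{eq:srhmc2} (the $\tfrac{1}{\beta}\nabla_{\thetab}\Gb_1$ and Brownian terms vanish, $\Xib$ freezes at $\Gb_2(\thetab)$, and the $\Xib$-dynamics disappear, as argued in Section~\ref{sec:refinement}), the bias and MSE of the refinement chain equal $\lim_{m\to\infty}$ of the corresponding quantities for these shifted exploration windows, provided that limit exists.

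The first step is to show the integral term vanishes in the limit. For $\thetab\neq\thetab^*$ we have $\Delta U(\thetab) = U(\thetab) - U(\thetab^*) > 0$, so $e^{-\beta_l\Delta U(\thetab)}\to 0$ pointwise (a.e.) as $\beta_l\to\infty$; using the monotone sequence $\{g_m\}$ introduced before the statement, which increases to $0$, the Monotone Convergence Theorem \citep{Schechter:book97} yields $\frac{1}{L}\sum_{l=m}^{m+L-1}\int e^{-\beta_l\Delta U(\thetab)}\,\mathrm{d}\thetab \to 0$ (here one uses $\int e^{-\beta_1\Delta U(\thetab)}\,\mathrm{d}\thetab < \infty$, so the $m=1$ window dominates). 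Squaring and repeating the argument handles the $(\cdot)^2$ integral term in the MSE bound. Hence the entire first summand in both bounds of Theorem~\ref{theo:bias1} drops out in the limit, which is exactly the difference between Theorem~\ref{theo:bias1} and Corollary~\ref{coro:refine_con}.

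It remains to control the second summand, which is the source of the constants $D_1, D_2$. The terms $\frac{1}{Lh}$, $h^2$ and $h^4$ are independent of $\beta$ and survive unchanged. For the operator $\Delta V_l = \left(G_1(\thetab)(\nabla_{\thetab}\tilde U_l - \nabla_{\thetab} U) + \frac{\mathcal{B}_l}{\beta_l}\right)\cdot\nabla_{\pb}$ from \eqref{eq:approx_grad} and Section~\ref{sec:gradient_app}, as $\beta_l\to\infty$ the gradient-approximation contribution $\frac{\mathcal{B}_l}{\beta_l}$ tends to $0$, while the stochastic-gradient-noise contribution $G_1(\thetab)(\nabla_{\thetab}\tilde U_l - \nabla_{\thetab} U)$ is unaffected; thus $\|\mathbb{E}\Delta V_l\|$ and $\mathbb{E}\|\Delta V_l\|^2$ converge to the pure-noise quantities and stay bounded. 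Setting $D_1 := D$ and $D_2 := D^2$ (or a finite multiple absorbing a bound on the cross term in $\mathbb{E}\|\Delta V_l\|^2$) then produces exactly the stated bounds.

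I expect the main obstacle to be the rigorous justification that the refinement bias/MSE \emph{equals} the limit of the exploration bias/MSE rather than merely being dominated by it. This needs: (i) the monotone structure above to commute the limit with the $\thetab$-integration; (ii) a dominating envelope / uniform integrability for the $\Delta V_l$ terms so that the expectations pass to the limit; and, most delicately, (iii) verifying that Assumption~\ref{ass:assumption1} — boundedness and smoothness of the Poisson-equation solution $\psi_t$ in \eqref{eq:PoissonEq1} and of its controlling function $\mathcal{V}$ — holds \emph{uniformly} in $\beta$, so that the constants $C, D$ of Theorem~\ref{theo:bias1}, hence $D_1, D_2$, remain finite in the limit. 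For (iii) one uses that the limiting (refinement) generator is the $\beta\to\infty$ contraction of \eqref{eq:srhmc2} onto $(\thetab,\pb)$ with $\Xib$ held fixed, to which the ellipticity/hypoellipticity arguments of \citet{MattinglyST:JNA10} still apply, so the Poisson equation retains a solution at least as smooth as $U$ and Assumption~\ref{ass:assumption1} is inherited.
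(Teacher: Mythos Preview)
Your proposal is correct and follows essentially the same route as the paper: apply Theorem~\ref{theo:bias1} to a sliding window indexed by $m$, use the monotone sequence $g_m = -\frac{1}{L}\sum_{l=m}^{L+m-1}e^{-\beta_l\Delta U(\thetab)}$ together with the Monotone Convergence Theorem to kill the integral term as $m\to\infty$, and observe that the remaining $D(\cdot)$ terms survive unchanged. The paper's proof is in fact considerably terser than yours --- it simply invokes monotone convergence on $\{g_m\}$ and declares the integral terms gone --- and does not discuss your points (ii) and (iii) (uniform integrability for $\Delta V_l$, uniformity in $\beta$ of the constants from Assumption~\ref{ass:assumption1}), so your caveats about rigor are well placed but go beyond what the paper itself supplies.
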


Corollary~\ref{coro:refine_con} implies that in the {\em refinement} stage, the discrepancy between
annealing distributions and the global optima vanishes, leaving only errors from discretized simulations
of the SDEs, similar to the result of general SG-MCMC \citep{ChenDC:NIPS15}. We note that after {\em exploration}, 
Santa becomes a pure stochastic optimization algorithm, thus convergence results in term of regret bounds can 
also be derived; refer to Supplementary Section~\ref{sec:relate_adam} for more details.

\section{Experiments}\label{sec:experiments}
\subsection{Illustration}
In order to demonstrate that Santa is able to achieve the global mode of an objective function, we consider the
double-well potential \citep{DingFBCSN:NIPS14},
\begin{align*}
	U(\theta) = (\theta + 4)(\theta + 1)(\theta - 1)(\theta - 3) / 14 + 0.5~.
\end{align*}
As shown in Figure~\ref{fig:syn_twowell} (left), the double-well potential has two modes, located
at $\theta = -3$ and $\theta = 2$, with the global optima at $\theta = -3$.
We use a decreasing learning rate $h_t = t^{-0.3} / 10$, and the annealing sequence
is set to $\beta_t = t^{2}$. To make the optimization more challenging, we initialize the
parameter at $\theta_0 = 4$, close to the local mode. The evolution of $\theta$ with respect to iterations
is shown in Figure~\ref{fig:syn_twowell}(right). As can be seen, $\theta$ first moves to the local mode but quickly jumps
out and moves to the global mode in the {\em exploration} stage (first half iterations); 
in the {\em refinement} stage, $\theta$ quickly converges to the global mode and sticks to it afterwards. 
In contrast, RMSprop is trapped on the local optima, and convergences slower than Santa at the beginning.
\begin{figure}[h] 
	\centering
	\begin{minipage}{0.49\linewidth}
		\includegraphics[width=\linewidth]{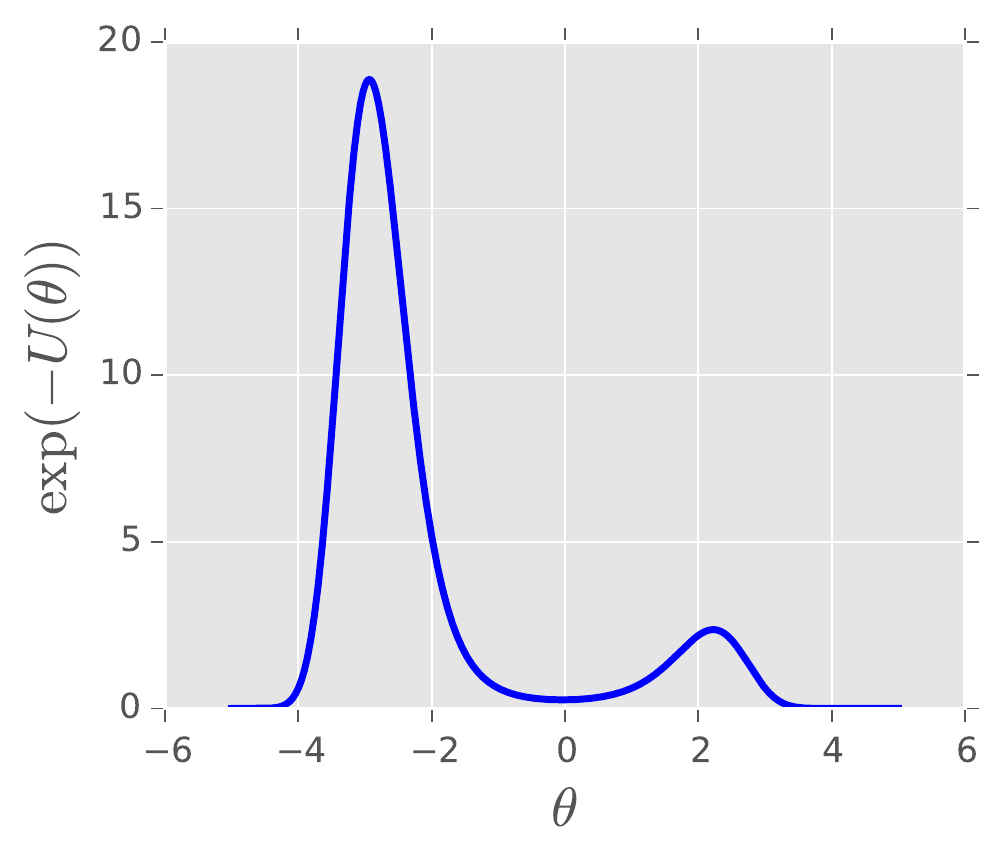}
	\end{minipage}
	\begin{minipage}{0.49\linewidth}
		\includegraphics[width=\linewidth]{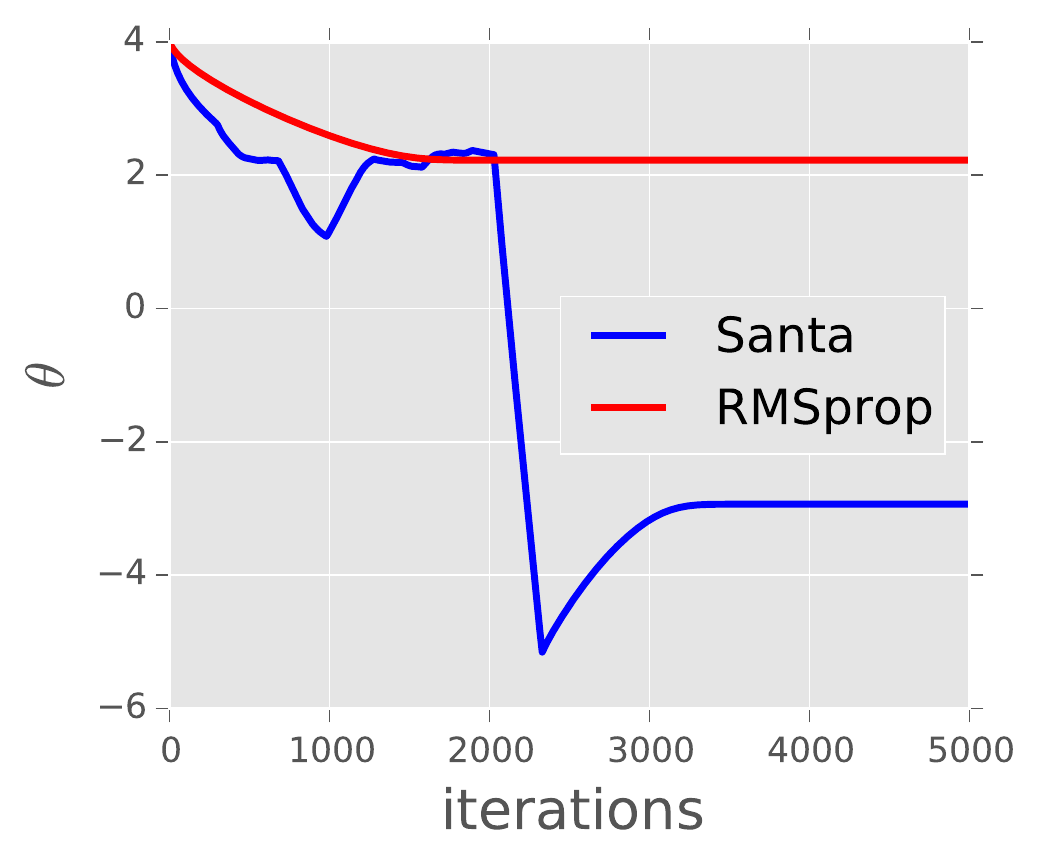}
	\end{minipage}
	\caption{(Left) Double-well potential. (Right) The evolution of $\theta$ using Santa and RMSprop algorithms.}
	\label{fig:syn_twowell}
\end{figure}
\subsection{Feedforward neural networks}
We first test Santa on the Feedforward Neural Network (FNN) with rectified linear units (ReLU). We test two-layer models with network sizes 784-X-X-10, where X is the number of hidden units for each layer; 100 epochs are used. For variants of Santa, we denote Santa-E as Santa with a Euler scheme illustrated in Algorithm~\ref{alg:sahmc}, Santa-r as Santa running only on the {\em refinement} stage, but with updates on $\alphab$ as in the {\em exploration} stage. We compare Santa with SGD, SGD-M, RMSprop, Adam, SGD with dropout, SGLD and Bayes by Backprop \citep{blundell2015weight}. We use a grid search to obtain good learning rates for each algorithm, resulting in $4\times 10^{-6}$ for Santa, $5\times 10^{-4}$ for RMSprop, $10^{-3}$ for Adam, and $5 \times 10^{-1} $ for SGD, SGD-M and SGLD. We choose an annealing schedule of $\beta_t = A t^{\gamma}$ with $A = 1$ and $\gamma$ selected from 0.1 to 1 with an interval of 0.1. For simplicity,
the {\em exploration} is set to take half of total iterations.

We test the algorithms on the standard MNIST dataset, which contains $28 \times 28$ handwritten digital images from $10$ classes with $60,000$ training samples and $10,000$ test samples. The network size (X-X) is set to 400-400 and 800-800, and test classification errors are shown in Table~\ref{tab:fnn_cnn}.  Santa show improved state-of-the-art performance amongst all algorithms. The Euler scheme shows a slight decrease in performance, due to the integration error when solving the SDE. Santa without {\em exploration} (\emph{i.e.}, Santa-r) still performs relatively well. Learning curves are plotted in Figure~\ref{fig:fnn_cnn}, showing that Santa converges as fast as other algorithms but to a better local optima\footnote{Learning curves of FNN with size of 800 are provided in Supplementary Section~\ref{supp:additional_results}.}.
\subsection{Convolution neural networks}
\begin{figure}[t!]
	\centering
	\includegraphics[width=0.49\linewidth]{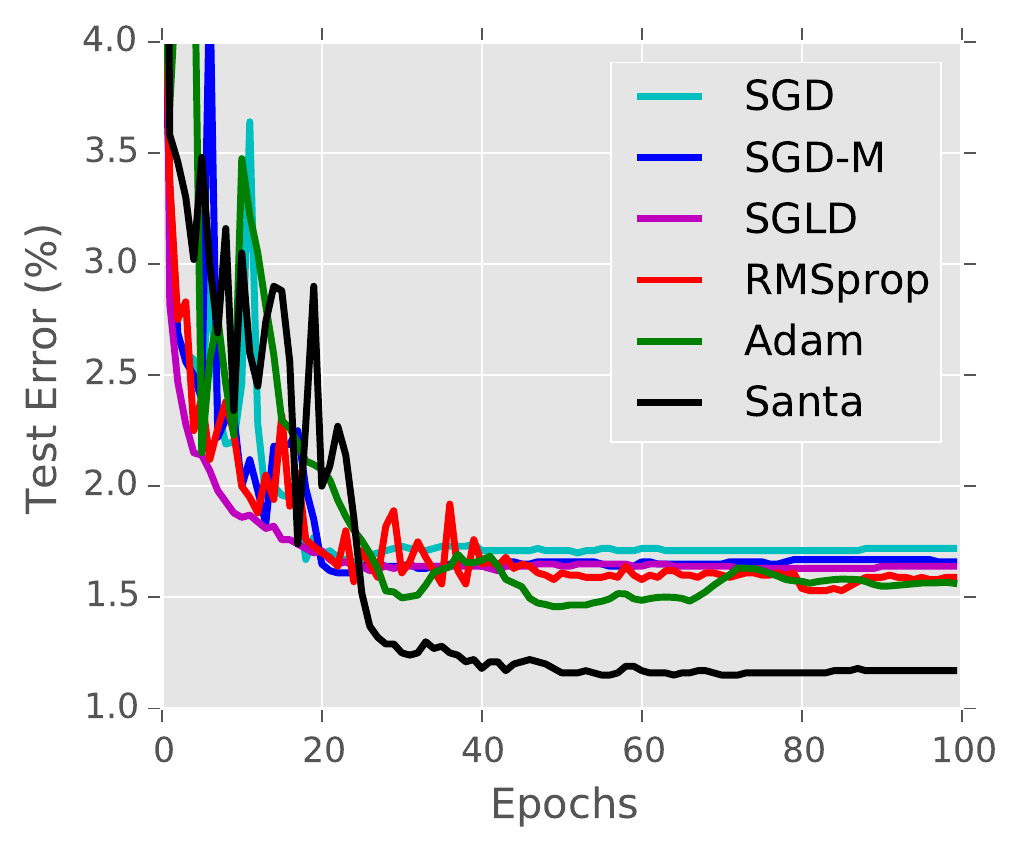}
    \includegraphics[width=0.49\linewidth]{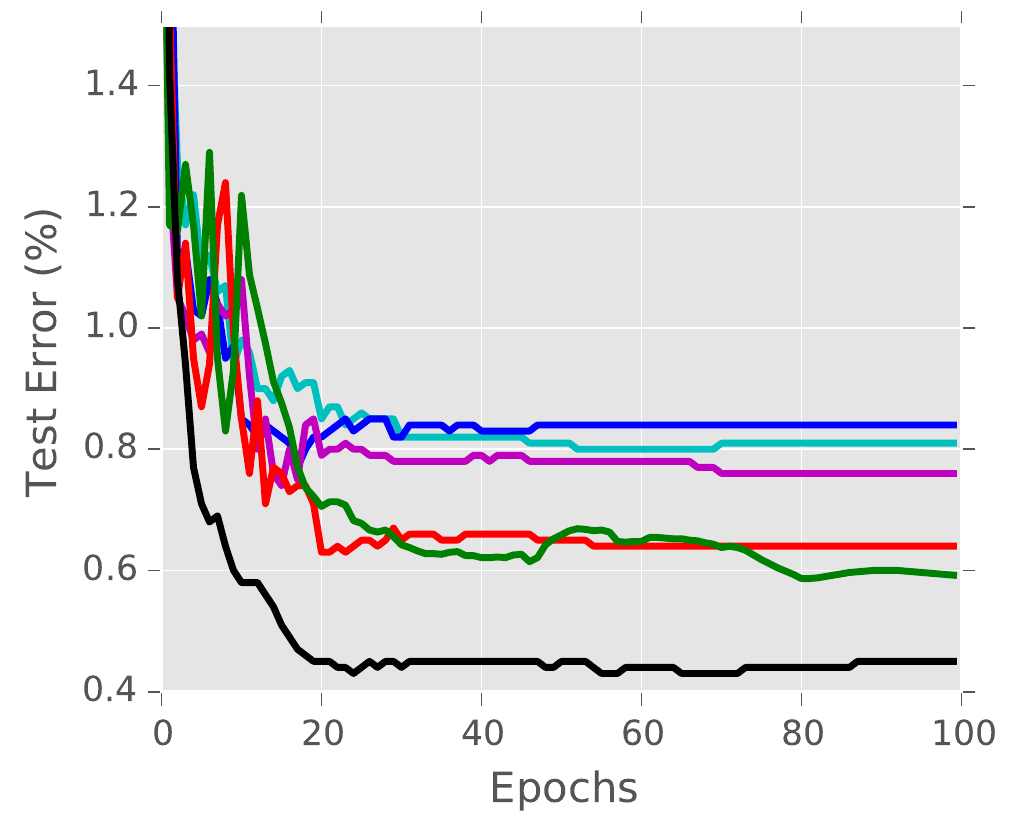}
	\caption{\small{Learning curves of different algorithms on MNIST. (Left) FNN with size of 400.  (Right) CNN.}}
	\label{fig:fnn_cnn}
\end{figure}
\begin{table}[t!]
	\caption{\small{Test error on MNIST classification using FNN and CNN. ($^\diamond$) taken from \citet{blundell2015weight}. ($^\triangleright$) taken from \citet{zeiler2013stochastic}. ($^\circ$) taken from \citet{LinCY14NIN}. ($^\star$) taken from \citet{goodfellow2013maxout}.} }\label{tab:fnn_cnn}
  	\centering
  	\begin{adjustbox}{minipage=1.05\linewidth,scale=0.95}
  	\begin{tabular}{c c c c} \hline
    		Algorithms & FNN-400 & FNN-800 & CNN \\
    		\hline
    		Santa & {\bf1.21\%} & {\bf1.16\%} & {\bf0.47\%} \\
    		Santa-E & 1.41\% & 1.27\% & 0.58\% \\
    		Santa-r & 1.45\% & 1.40\% & 0.49\% \\
    		\hline
    		Adam & 1.53\% & 1.47\% & 0.59\% \\
    		RMSprop & 1.59\% & 1.43\% & 0.64\% \\
    		SGD-M & 1.66\% & 1.72\% & 0.77\% \\
    		SGD & 1.72\% &1.47\% & 0.81\% \\
    		SGLD & 1.64\% & 1.41\% & 0.71\% \\
    		\hline
    		BPB$^\diamond$ & 1.32\% & 1.34\% & $-$\\
    		SGD, Dropout$^\diamond$ & 1.51\% &1.33\% & $-$ \\
    		Stoc. Pooling$^\triangleright$ & $-$ & $-$ & 0.47\% \\
		    NIN, Dropout$^\circ$ & $-$ & $-$ & 0.47\% \\
		    Maxout, Dropout$^\star$& $-$ & $-$  & 0.45\% \\
    		\hline
  	\end{tabular}
  	\end{adjustbox}
\end{table}
We next test Santa on the Convolution Neural Network (CNN). Following \citet{jarrett2009best}, a standard 
network configuration with 2 convolutional layers followed by 2 fully-connected layers is adopted. Both convolutional layers use $5 \times 5$ filter size with 32 and 64 channels, respectively; $2 \times 2$ max pooling is used after each convolutional layer. The fully-connected layers have 200-200 hidden nodes with ReLU activation. The same parameter setting and dataset as in the FNN are used. The test errors are shown in Table~\ref{tab:fnn_cnn}, and the corresponding learning curves are shown in Figure~\ref{fig:fnn_cnn}. Similar trends as in FNN are obtained. Santa significantly outperforms other algorithms with an error of 0.45\%. This result is comparable or even better than some recent state-of-the-art CNN-based systems, which have much more complex architectures.
\subsection{Recurrent neural networks}
We test Santa on the Recurrent Neural Network (RNN) for sequence modeling, where a model is trained to minimize the negative log-likelihood of training sequences:
\begin{align} \label{eq:rnn}
\min_{\vtheta} \frac{1}{N} \sum_{n=1}^N \sum_{t=1}^{T_n} -\log p(\xb_t^n|\xb_1^n,\ldots,\xb_{t-1}^n;\vtheta)
\end{align}
where $\vtheta$ is a set of model parameters, $\{\xb_t^n\}$ is the observed data. The conditional distributions in (\ref{eq:rnn}) are modeled by the RNN. The hidden units are set to gated recurrent units \citep{cho2014learning}. 

We consider the task of sequence modeling on four different polyphonic music sequences of piano, \emph{i.e.}, Piano-midi.de (Piano), Nottingham (Nott), MuseData (Muse) and JSB chorales (JSB). Each of these datasets are represented as a collection of 88-dimensional binary sequences, that span the whole range of piano from A0 to C8. 

The number of hidden units is set to 200. Each model is trained for at most 100 epochs. According to the experiments and their results on the validation set, we use a learning rate of 0.001 for all the algorithms. For Santa, we consider an additional experiment using a learning rate of 0.0002, denoted Santa-s. The annealing coefficient $\gamma$ is set to 0.5. Gradients are clipped if the norm of the parameter vector exceeds 5. We do not perform any dataset-specific tuning other than early stopping on validation sets. Each update is done using a minibatch of one sequence. 
%
%
\begin{figure}[t!]
	\centering
    \includegraphics[width=0.49\linewidth]{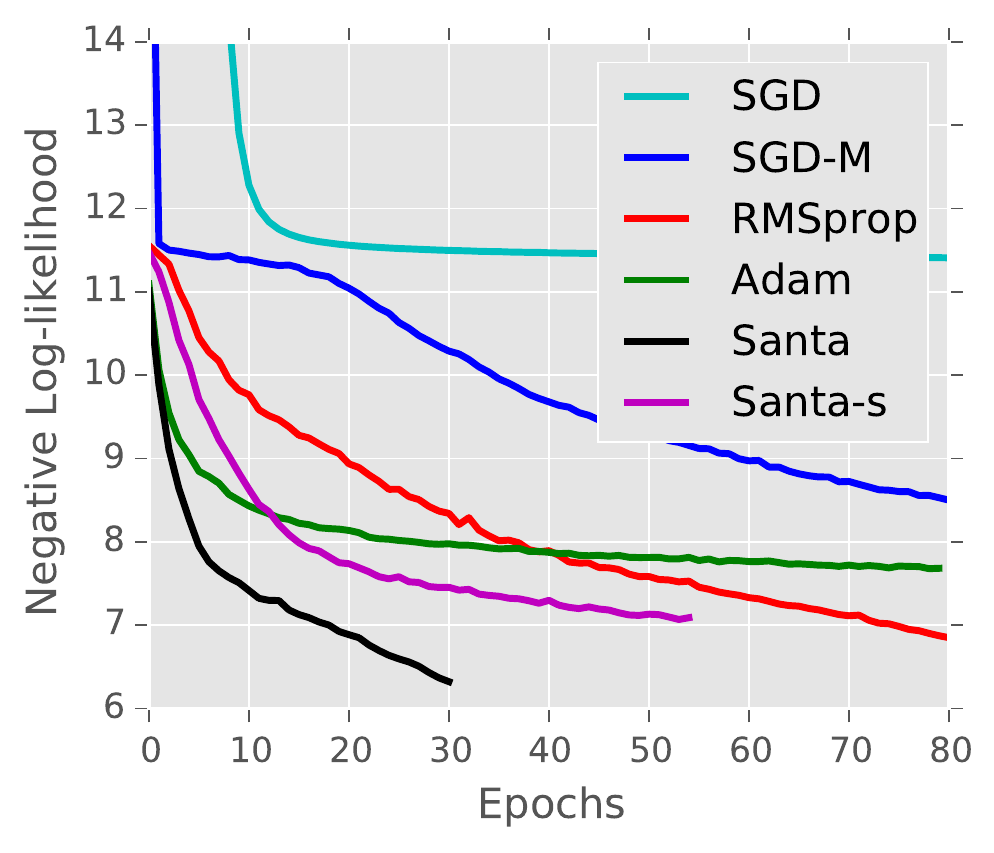}
    \includegraphics[width=0.49\linewidth]{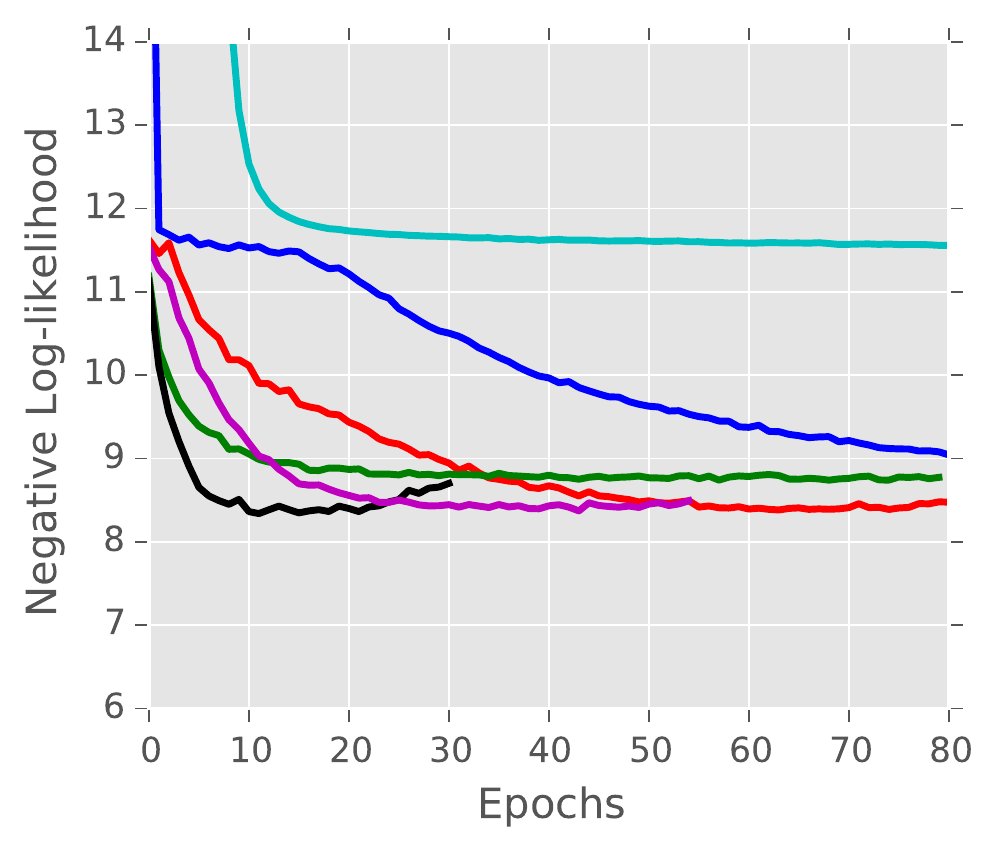}
	\caption{\small{Learning curves of different algorithms on Piano using RNN. (Left) training set. (Right) validation set.}}
	\label{fig:rnn}
\end{figure}
\begin{table}[t!] 
	\caption{Test negative log-likelihood results on polyphonic music datasets using RNN. ($^\diamond$) taken from \citet{boulanger2012modeling}.}\label{tab:rnn}
  	\centering
  	\begin{tabular}{c c c c c} \hline
    		Algorithms & Piano. & Nott. & Muse. & JSB. \\
    		\hline
    		Santa & {\bf7.60} & {\bf3.39} & {\bf7.20} & {\bf8.46} \\
    		Adam & 8.00 & 3.70 & 7.56 & 8.51 \\
    		RMSprop & 7.70 & 3.48 & 7.22 & 8.52 \\
    		SGD-M & 8.32 & 3.60 & 7.69 & 8.59 \\
    		SGD & 11.13 & 5.26 & 10.08 & 10.81 \\
    		\hline
    		HF$^\diamond$ & 7.66 & 3.89 & {\bf7.19} & 8.58 \\
    		SGD-M$^\diamond$ & 8.37 & 4.46 & 8.13 & 8.71 \\
    		\hline
  	\end{tabular}
\end{table}

The best log-likelihood results on the test set are achieved by using Santa, shown in Table~\ref{tab:rnn}. Learning curves on the Piano dataset are plotted in Figure~\ref{fig:rnn}. We observe that Santa achieves fast convergence, but is overfitting. This is straightforwardly addressed through early stopping. The learning curves for all the other datasets are provided in Supplementary Section~\ref{supp:additional_results}.

\section{Conclusions}
We propose Santa, an annealed SG-MCMC method for stochastic optimization.  Santa is able to explore the parameter space efficiently and locate close to the
global optima by annealing. At the zero-temperature limit, Santa gives a novel stochastic optimization algorithm
where both model parameters and momentum are updated element-wise and adaptively. We provide theory on the convergence of Santa to the global
optima for an (non-convex) objective function. Experiments show best results on several deep models compared 
to related stochastic optimization algorithms.


\newpage
\section*{Acknowledgements}
This research was supported in part by ARO, DARPA,
DOE, NGA, ONR and NSF.
\bibliographystyle{abbrvnat}
\bibliography{asgnht_aistats}

\newpage
\twocolumn[

\aistatstitle{Bridging the Gap between Stochastic Gradient MCMC and Stochastic Optimization: Supplementary Material}

\aistatsauthor{ Changyou Chen$^\dag$ \And David Carlson$^\ddag$ \And Zhe Gan$^\dag$ \And Chunyuan Li$^\dag$ \And Lawrence Carin$^\dag$ }
\aistatsaddress{ $^\dag$Department of Electrical and Computer Engineering, Duke University}
\vspace{-0.7cm}\aistatsaddress{$^\ddag$Department of Statistics and Grossman Center for Statistics of Mind, Columbia University} ]

\appendix
\section{Solutions for the sub-SDEs}\label{sec:sde-solution}

We provide analytic solutions for the split sub-SDEs in Section~\ref{para:SSI}. For stepsize $h$, the solutions 
are given in \eqref{eq:subsde_solution}.
\begin{align}\label{eq:subsde_solution}
	&A: \left\{\begin{array}{ll}
	\thetab_t &= \thetab_{t-1} + \Gb_1(\thetab)\pb h \\
	\pb_t &= \pb_{t-1} \\
	\Xib_{t} &= \Xib_{t-1} +\left(\Qb - \frac{1}{\beta}I\right)h
	\end{array}\right., \\
	&B: \left\{\begin{array}{ll}
	\thetab_{t} &= \thetab_{t-1} \\
	\pb_{t} &= \exp\left(- \Xib h\right) \pb_{t-1} \\
	\Xib_{t} &= \Xib_{t-1}
	\end{array}\right., \nonumber\\
	&O: \left\{\begin{array}{ll}
	\thetab_{t} &= \thetab_{t-1} \\
	\pb_{t} &= \pb_{t-1} + \left(-G_1(\thetab)\nabla_\thetab U(\thetab) + \frac{1}{\beta}\nabla_\thetab G_1(\thetab)\right. \\ 
	 &~~~~~~~~\left.+ G_1(\thetab)(\Xib - G_2(\thetab))\nabla_{\thetab}G_2(\thetab)\right) h \\
	 &~~~~~~~~+ ({\frac{2}{\beta}G_2(\thetab)})^{\frac{1}{2}}\odot \zeta_t \\
	\Xib_{t} &= \Xib_{t-1}
	\end{array}\right. \nonumber
\end{align}

\section{Proof of Lemma~\ref{lem:FP}}

For a general stochastic differential equation of the form
\begin{align}\label{eq:generalSDE}
	\mathrm{d} \xb = F(\xb) \mathrm{d}t + \sqrt{2} D^{1/2}(\xb) \mathrm{d} \wb~,
\end{align}
where $\xb \in \RR^N$,
$F: \RR^N \rightarrow \RR^N$, $D: \RR^{M} \rightarrow \RR^{N \times P}$ are measurable
functions with $P$, and $\wb$ is standard $P$-dimensional 
Brownian motion. \eqref{eq:srhmc2} is a special case of the general form \eqref{eq:generalSDE} with
\begin{align}\label{eq:reformulate}
\xb &= (\thetab, \pb, \Xib) \\
F(\xb) &= \left( \begin{array}{c}
\Gb_1(\thetab)\pb \\
-\Gb_1(\thetab)\nabla_{\thetab}U(\thetab) - \Xib\pb + \frac{1}{\beta}\nabla_{\thetab}\Gb_1(\thetab) \\
~~~~~+ \Gb_1(\thetab)(\Xib - \Gb_2(\thetab))\nabla_{\thetab}\Gb_2(\thetab) \\
\Qb - \frac{1}{\beta}\mathbf{I} \end{array} \right) \nonumber\\
D(\xb) &= \left( \begin{array}{ccc}
\mathbf{0} & \mathbf{0} & \mathbf{0} \\
\mathbf{0} & \frac{1}{\beta}\Gb_2(\thetab) & \mathbf{0} \\
\mathbf{0} & \mathbf{0} & \mathbf{0} \end{array} \right) \nonumber
\end{align}
We write the joint distribution of $\xb$ as
\begin{align*}
	\rho(\xb) &= \frac{1}{Z} \exp\left\{-H(\xb)\right\} \triangleq \frac{1}{Z} \exp\left\{-U(\thetab) - E(\thetab, \pb, \Xib)\right\}~.
\end{align*}

A reformulation of the main theorem in \citet{DingFBCSN:NIPS14} gives the following lemma, which is used to prove
Lemma~\ref{lem:FP} in the main text.
\begin{lemma} \label{lemma:fpe}
	The stochastic process of $\vec{\theta}$ generated by the stochastic differential equation~\eqref{eq:generalSDE} has the target distribution $p_{\thetab}(\thetab) = \frac{1}{Z} \exp\{-U(\thetab)\}$ as its stationary distribution, if $\rho(\xb)$ satisfies the following marginalization condition:
	\begin{align}\label{eq:margcond}
		\exp\{-U(\thetab)\} \propto \int \exp\{-U(\thetab) - E(\thetab, \pb, \Xib)\} \mathrm{d}\pb \mathrm{d}\Xib \,,
	\end{align}
	and if the following condition is also satisfied:
	\begin{align} \label{eq:fpecond}
		\nabla \cdot (\rho F) = \nabla \nabla^\top:(\rho D) \,,
	\end{align}
	where $\nabla \triangleq \left(\partial/\partial \thetab, \partial / \partial \pb, \partial / \Xib\right)$, ``$\cdot$'' represents the vector inner product operator, ``$:$'' represents a matrix double dot product, {\it i.e.}, $\Xb:\Yb \triangleq \mbox{tr}(\Xb^\top \Yb)$.
\end{lemma}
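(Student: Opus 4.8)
The plan is to derive Lemma~\ref{lemma:fpe} from the Fokker--Planck (forward Kolmogorov) equation of the Itô diffusion \eqref{eq:generalSDE}, following the route of \citet{DingFBCSN:NIPS14}. First I would recall that for $\mathrm{d}\xb = F(\xb)\mathrm{d}t + \sqrt{2}D^{1/2}(\xb)\mathrm{d}\wb$ the density $\rho_t$ of $\xb_t$ evolves by
\[
	\partial_t \rho_t = -\nabla\cdot\!\left(F\rho_t\right) + \nabla\nabla^\top\!:\!\left(D\rho_t\right),
\]
a standard consequence of Itô's formula applied to compactly supported test functions together with integration by parts (see \citet{Risken:FPE89}); the factor $\sqrt{2}$ in the diffusion coefficient is exactly what turns the second-order term into $\nabla\nabla^\top:(D\rho)$ with no extra constant, since then $\sigma\sigma^\top = 2D$ and $\tfrac{1}{2}\cdot 2D = D$. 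Consequently a probability density $\rho$ is an invariant distribution of the \emph{joint} process $\xb=(\thetab,\pb,\Xib)$ precisely when the right-hand side vanishes, i.e. $\nabla\cdot(\rho F) = \nabla\nabla^\top:(\rho D)$, which is hypothesis \eqref{eq:fpecond}. So, under \eqref{eq:fpecond}, if the process is started at $\rho(\xb)=\tfrac{1}{Z} e^{-H(\xb)}$ it remains at $\rho$ for all $t$.

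Next I would pass to the $\thetab$-marginal. Using $H(\xb)=U(\thetab)+E(\thetab,\pb,\Xib)$ and integrating the joint stationary density over $(\pb,\Xib)$,
\[
	\int \rho(\thetab,\pb,\Xib)\,\mathrm{d}\pb\,\mathrm{d}\Xib \;\propto\; \int e^{-U(\thetab) - E(\thetab,\pb,\Xib)}\,\mathrm{d}\pb\,\mathrm{d}\Xib,
\]
which by the marginalization hypothesis \eqref{eq:margcond} is proportional to $e^{-U(\thetab)}$. Since the law of $\xb_t$ is stationary at $\rho$, the law of its $\thetab$-component is stationary at this marginal, namely at $p_\thetab(\thetab)\propto e^{-U(\thetab)}$ --- the claimed target distribution. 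As is customary in this literature, I would assume ergodicity of the dynamics so that this invariant measure is unique and hence the limiting distribution.

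The main obstacle is not any single computation but making the Fokker--Planck step rigorous: justifying the integration-by-parts identities with vanishing boundary terms at infinity (which needs growth/decay control on $F$, $D$, and $\rho$), ensuring $\rho$ is smooth enough for $\nabla\nabla^\top:(\rho D)$ to be classical, and using ``the marginal of a stationary joint law is stationary for the marginal process'' only in the correct sense --- the $\thetab$-coordinate alone is not Markov, so the statement is genuinely about the augmented state $(\thetab,\pb,\Xib)$. Here I would lean on the regularity already assumed in the paper (compactly supported test functions in the generator, Assumption~\ref{ass:assumption1}) and on the cited result of \citet{DingFBCSN:NIPS14}. The subsequent \emph{application} of this lemma to obtain Lemma~\ref{lem:FP} --- i.e.\ verifying \eqref{eq:margcond} and \eqref{eq:fpecond} for the specific $F$, $D$ of \eqref{eq:reformulate}, where \eqref{eq:fpecond} amounts to the drift terms $\tfrac{1}{\beta}\nabla_\thetab\Gb_1$, $\Gb_1(\Xib-\Gb_2)\nabla_\thetab\Gb_2$, and $\Xib\pb$ cancelling the divergence of the $\Gb_2$-diffusion --- is a direct but lengthy calculation that belongs to the proof of Lemma~\ref{lem:FP} rather than to this lemma.
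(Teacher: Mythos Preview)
Your proposal is correct, and in fact goes beyond what the paper does: the paper does \emph{not} prove Lemma~\ref{lemma:fpe} at all. It simply states the lemma as ``a reformulation of the main theorem in \citet{DingFBCSN:NIPS14}'' and immediately proceeds to use it in the proof of Lemma~\ref{lem:FP}. Your argument --- deriving the stationary Fokker--Planck equation from the It\^o diffusion, identifying \eqref{eq:fpecond} as the vanishing of $\partial_t\rho_t$, and then marginalizing via \eqref{eq:margcond} --- is exactly the standard route underlying the cited result, and you are right to separate it from the lengthy verification of \eqref{eq:fpecond} for the specific drift and diffusion of \eqref{eq:reformulate}, which the paper carries out in its proof of Lemma~\ref{lem:FP}.
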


\begin{proof}[Proof of Lemma~\ref{lem:FP}]
	We first have reformulated \eqref{eq:srhmc2} using the general SDE form of \eqref{eq:generalSDE},
	resulting in \eqref{eq:reformulate}.
	Lemma~\ref{lem:FP} states the joint distribution of $(\thetab, \pb, \Xib)$ is
	\begin{align}\label{eq:equidis}
		\rho(\xb) =& \frac{1}{Z} \exp\left(-\frac{1}{2}\pb^\top \pb - U(\thetab)\right. \nonumber\\
			-& \left.\frac{1}{2} \mbox{tr} \left\{\left(\Xib - \Gb_2(\thetab)\right)^\top\left(\Xib - \Gb_2(\thetab)\right)\right\}\right)~,
	\end{align}
	with $H(\xb) = \frac{1}{2}\pb^\top \pb + U(\thetab) + \frac{1}{2} \mbox{tr} \left\{\left(\Xib - \Gb_2(\thetab)\right)^\top\left(\Xib - \Gb_2(\thetab)\right)\right\}$. The marginalization condition~\eqref{eq:margcond} is trivially satisfied, we are left to verify condition~\eqref{eq:fpecond}. Substituting $\rho(\xb)$ and $F$ into \eqref{eq:fpecond}, we have the left-hand side
	
	\begin{adjustbox}{minipage=1.0\linewidth,scale=0.88}
	\begin{align*}
		&\mbox{LHS} = \sum_i \frac{\partial}{\partial \xb_i} \left(\rho F_i\right) \\
		=& \sum_i \frac{\partial \rho}{\partial \xb_i} F_i + \frac{\partial F_i}{\partial \xb_i} \rho \\
		=& \sum_i \left(\frac{\partial F_i}{\partial \xb_i} - \frac{\partial H}{\partial \xb_i}F_i\right) \rho \\ 
		=& \left(\sum_i \nabla_{\thetab_i}(\Gb_1)_{i:}\pb - \sum_i\mbox{diag}(\Xib)\right. \\
		&- \sum_{i}\beta\left(\nabla_{\thetab_i}U - \sum_{j}(\Xib_{ij} - (\Gb_2)_{ij})\nabla_{\thetab_i}(\Gb_2)_{ij}\right)(\Gb_1\pb)_i \\
		&- \beta p^T\left(-\Gb_1\nabla_{\thetab}U - \Xib\pb + \frac{1}{\beta}\nabla_{\thetab}\Gb_1+ \Gb_1(\Xib - \Gb_2)\nabla_{\thetab}\Gb_2\right) \\
		&\left.- \beta \sum_i \left(\Xib_{ii} - (\Gb_2)_{ii}\right)\left(\Qb_{ii} - \frac{1}{\beta}\right)\right)\rho \\
		&= \frac{1}{\beta} \mbox{tr}\left\{\Gb_2(\pb\pb^T - \mathbf{I})\right\} \rho \,.
	\end{align*}
	\end{adjustbox}
	
	It is easy to see for the right-hand side
	\begin{align*}
		\mbox{RHS} &= \sum_i \sum_j \frac{1}{\beta}(\Gb_2)_{ij} \frac{\partial^2}{\partial \xb_i \partial \xb_j} \rho \\
		&= \frac{1}{\beta}\sum_i\sum_j (\Gb_2)_{ij}\frac{\partial}{\partial \pb_j} \left(- \frac{\partial H}{\partial \pb_i} \rho\right) \\
		&= \frac{1}{\beta}\sum_i (\Gb_2)_{ii}\left(\pb_i^2 - 1\right) \rho \\
		&\equiv \mbox{LHS} \,.
	\end{align*}
	According to Lemma~\ref{lemma:fpe}, the joint distribution~\eqref{eq:equidis} is the equilibrium distribution of \eqref{eq:srhmc2}.
\end{proof}

\section{Proof of Theorem~\ref{theo:bias1}}

We start by proving the bias result of Theorem~\ref{theo:bias1}.

\begin{proof}[Proof of the bias]

For our $2$nd-order integrator, according to the definition, we have: 
	\begin{align} \label{eq:split_flow}
		&\mathbb{E}[\psi(\Xb_{t})] = \tilde{P}_h^l \psi(\Xb_{t-1}) = e^{h\tilde{\Lcal}_t} \psi(\Xb_{t-1}) + O(h^{3}) \nonumber\\
		&= \left(\mathbb{I} + h\tilde{\Lcal}_t\right) \psi(\Xb_{t-1}) + \frac{h^2}{2}\tilde{\Lcal}_t^2\psi(\Xb_{t-1}) + O(h^{3})~,
	\end{align}
	where $\mathcal{L}_t$ is the generator of the SDE for the $t$-th iteration, {\it i.e.}, using stochastic gradient instead of the full
	gradient, $\mathbb{I}$ is the identity map. Compared to the prove of \citet{ChenDC:NIPS15}, we need to consider the 
	approximation error for $\nabla_{\thetab}G_1(\thetab)$. As a result, \eqref{eq:split_flow} needs to be rewritten as:
	\begin{align} \label{eq:split_flow1}
		&\mathbb{E}[\psi(\Xb_{t})] \\
		\hspace{-0.5cm}=& \left(\mathbb{I} + h(\tilde{\Lcal}_t + \mathcal{B}_t)\right) \psi(\Xb_{t-1}) + \frac{h^2}{2}\tilde{\Lcal}_t^2\psi(\Xb_{t-1}) + O(h^{3})~,\nonumber
	\end{align}
	where $\mathcal{B}_t$ is from \eqref{eq:approx_grad}.
	Sum over $t = 1, \cdots, L$ in \eqref{eq:split_flow1}, take expectation on both sides, and use the relation
	$\tilde{\Lcal}_t  + \mathcal{B}_t = \Lcal_{\beta_t} + \Delta V_t$ to expand the first order term. We obtain
	\begin{align*}
		&\sum_{t=1}^{L}\mathbb{E}[\psi(\Xb_{t})]
		= \psi(\Xb_0) + \sum_{t=1}^{L-1} \mathbb{E}[\psi(\Xb_{t})] \\
		&+ h\sum_{t=1}^{L} \mathbb{E}[\mathcal{L}_{\beta_t}\psi(\Xb_{t-1})]
		+ h\sum_{t=1}^L \mathbb{E}[\Delta V_t \psi(\Xb_{t-1})] \\
		&+ \frac{h^2}{2}\sum_{t=1}^L \EE[\tilde{\Lcal}_t^2\psi(\Xb_{t-1})] + O(L h^{3}).
	\end{align*}
	We divide both sides by $Lh$, use the Poisson equation \eqref{eq:PoissonEq1}, and reorganize terms. We have:
	\begin{align}\label{eq:expansion11}
		&\mathbb{E}[\frac{1}{L}\sum_t\left(\phi(\Xb_{t}) - \bar{\phi}_{\beta_t}\right)] = \frac{1}{L}\sum_{t=1}^{L} \mathbb{E}[\mathcal{L}_{\beta_t}\psi(\Xb_{t-1})] \nonumber\\
		=&\frac{1}{Lh}\left(\mathbb{E}[\psi(\Xb_{t})] - \psi(\Xb_0)\right)
		- \frac{1}{L}\sum_t \mathbb{E}[\Delta V_t\psi(\Xb_{t-1})] \nonumber\\
		&- \frac{h}{2L}\sum_{t=1}^L \EE[\tilde{\Lcal}_t^2\psi(\Xb_{t-1})] + O(h^2) 
	\end{align}
	Now we try to bound $\tilde{\Lcal}_t^2$. Based on ideas from \citet{MattinglyST:JNA10},
	we apply the following procedure.
	First replace $\psi$ with $\tilde{\Lcal}_t\psi$ from 
	\eqref{eq:split_flow1} to \eqref{eq:expansion11}, and apply the same logic for $\tilde{\Lcal}_t\psi$ 
	as for $\psi$ in the above derivations, but this time expand in \eqref{eq:split_flow1}
	up to the order of $O(h^2)$, instead of the previous order $O(h^{3})$. After simplification, we obtain:
	\begin{align}\label{eq:expansion21}
		&\sum_t \mathbb{E}[\tilde{\Lcal}_t^2 \psi(\Xb_{t-1})]
		= O\left(\frac{1}{h} + Lh\right)
	\end{align}
	Substituting \eqref{eq:expansion21} into \eqref{eq:expansion11}, after simplification, we have:
	$\mathbb{E}\left(\frac{1}{L}\sum_t\left(\phi(\Xb_{t}) - \bar{\phi}_{\beta_t}\right)\right)$
	\begin{align*}
		=&\frac{1}{Lh}\underbrace{\left(\mathbb{E}[\psi(\Xb_{t})] - \psi(\Xb_0)\right)}_{C_1}
		- \frac{1}{L}\sum_t \mathbb{E}[\Delta V_t\psi(\Xb_{t-1})] \\
		&- O\left(\frac{h}{Lh} + h^{2}\right) + C_3 h^2~,
	\end{align*}	
	for some $C_3 \geq 0$.
	According to the assumption, the term $C_1$ is bounded. As a result, collecting low order terms, the bias can be expressed as:
	\begin{align*}
		&\left|\mathbb{E}\hat{\phi} - \bar{\phi}\right| \\
		=& \left|\mathbb{E}\left(\frac{1}{L}\sum_t\left(\phi(\Xb_{t}) - \bar{\phi}_{\beta_t}\right)\right) +  \frac{1}{L}\sum_t\bar{\phi}_{\beta_t} - \bar{\phi}\right| \\
		\leq& \left|\mathbb{E}\left(\frac{1}{L}\sum_t\bar{\phi}_{\beta_t} - \bar{\phi}\right)\right| + \left|\mathbb{E}\left(\frac{1}{L}\sum_t\left(\phi(\Xb_{t}) - \bar{\phi}_{\beta_t}\right)\right)\right| \\
		\leq& C\phi(\thetab^*)\left(\frac{1}{L}\sum_{t=1}^L\int_{\thetab \neq \thetab^*}e^{-\beta_t\hat{U}(\thetab)}\mathrm{d}\thetab\right) \\
		&+ \left|\frac{C_1}{Lh} - \frac{\sum_t \mathbb{E}\Delta V_t\psi(\Xb_{t-1})}{L} + C_3 h^2\right| \\
		\leq& C\phi(\thetab^*)\left(\frac{1}{L}\sum_{t=1}^L\int_{\thetab \neq \thetab^*}e^{-\beta_t\hat{U}(\thetab)}\mathrm{d}\thetab\right) + \left|\frac{C_1}{Lh}\right| \\
		&+ \left| \frac{\sum_t \mathbb{E}\Delta V_t\psi(\Xb_{t-1})}{L}\right| + \left|C_3 h^2\right| \\
		\leq& C\phi(\thetab^*)\left(\frac{1}{L}\sum_{t=1}^L\int_{\thetab \neq \thetab^*}e^{-\beta_t\hat{U}(\thetab)}\mathrm{d}\thetab\right) \\
		&+ D\left(\frac{1}{Lh} + \frac{\sum_t \left\|\mathbb{E}\Delta V_t\right\|}{L} + h^2\right)~,
	\end{align*}
	where the last equation follows from the finiteness assumption of $\psi$, $\|\cdot\|$ denotes the operator norm
	and is bounded in the space of $\psi$ due to the assumptions. 
	This completes the proof.
\end{proof}

We will now prove the MSE result .

\begin{proof}[Proof of the MSE bound]

Similar to the proof of Theorem~\ref{theo:bias1}, for our 2nd--order integrator we have:
\begin{align*}
	\mathbb{E}\left(\psi_{\beta_t}(\Xb_{t})\right) &= \left(\mathbb{I} + h(\mathcal{L}_{\beta_t} + \Delta V_t)\right) \psi_{\beta_{t-1}}(\Xb_{t-1}) \\
	&+ \frac{h^2}{2}\tilde{\mathcal{L}}_t^2 \psi_{\beta_{t-1}}(\Xb_{t-1}) + O(h^{3})~.
\end{align*}
Sum over $t$ from 1 to $L+1$ and simplify, we have:
\begin{align*}
	\sum_{t=1}^L\mathbb{E}&\left(\psi_{\beta_t}(\Xb_{t})\right) = \sum_{t=1}^L \psi_{\beta_{t-1}}(\Xb_{t-1}) \\
	&+ h\sum_{t=1}^L \mathcal{L}_{\beta_t}\psi_{\beta_{t-1}}(\Xb_{t-1}) + h\sum_{t=1}^L \Delta V_t\psi_{\beta_{t-1}}(\Xb_{t-1}) \\
	&+ \frac{h^2}{2}\sum_{t=1}^L\tilde{\mathcal{L}}_t^2 \psi_{\beta_{t-1}}(\Xb_{t-1}) + O(L h^{3})~.
\end{align*}

Substitute the Poisson equation \eqref{eq:PoissonEq1} into the above equation, divide both sides by $Lh$ 
and rearrange related terms, we have
\begin{align*}
	&\frac{1}{L}\sum_{t=1}^L\left(\phi(\Xb_{t}) - \bar{\phi}_{\beta_t}\right) = \frac{1}{Lh}\left(\mathbb{E}\psi_{\beta_L}(\Xb_{Lh}) - \psi_{\beta_0}(\Xb_0)\right) \\
	&- \frac{1}{Lh}\sum_{t=1}^{L}\left(\mathbb{E}\psi_{\beta_{t-1}}(\Xb_{t-1}) - \psi_{\beta_{t-1}}(\Xb_{t-1})\right) \\
	&- \frac{1}{L}\sum_{t=1}^L \Delta V_t\psi_{\beta_{t-1}}(\Xb_{t-1})
	- \frac{h}{2L}\sum_{t=1}^L\tilde{\mathcal{L}}_t^2 \psi_{\beta_{t-1}}(\Xb_{t-1}) + O(h^2)
\end{align*}

Taking the square of both sides, it is then easy to see there exists some positive constant $C$, such that
\begin{align}\label{eq:mse1}
	&\left(\frac{1}{L} \sum_{t=1}^L \left(\phi(\Xb_{t}) - \bar{\phi}_{\beta_t}\right)\right)^2 \\
	\leq& C\left(\underbrace{\frac{\left(\mathbb{E}\psi_{\beta_L}(\Xb_{Lh}) - \psi_{\beta_0}(\Xb_0)\right)^2}{L^2h^2}}_{A_1}\right. \nonumber\\
	&\left.+ \underbrace{\frac{1}{L^2h^2}\sum_{t=1}^L\left(\mathbb{E}\psi_{\beta_{t-1}}(\Xb_{t-1}) - \psi_{\beta_{t-1}}(\Xb_{t-1})\right)^2}_{A_2} \right.\nonumber\\
	&+ \frac{1}{L^2}\sum_{t=1}^L \Delta V_t^2\psi_{\beta_{t-1}}(\Xb_{t-1}) \nonumber\\ 
	&\left.+ \underbrace{\frac{h^{2}}{2L^2}\left(\sum_{t=1}^L\tilde{\mathcal{L}}_t^2 \psi_{\beta_{t-1}}(\Xb_{t-1})\right)^2}_{A_3} + h^{4}\right) \nonumber
\end{align}

$A_1$ is easily bounded by the assumption that $\|\psi\| \leq V^{r_0} < \infty$. $A_2$ is bounded because it can be shown that
$\mathbb{E}\left(\psi_{\beta_t}(\Xb_{t})\right) - \psi_{\beta_t}(\Xb_{t}) \leq C_1 \sqrt{h} + O(h)$ for $C_1 \geq 0$. Intuitively this
is true because the only difference between $\mathbb{E}\left(\psi_{\beta_t}(\Xb_{t})\right)$ and $\psi_{\beta_t}(\Xb_{t})$ lies
in the additional Gaussian noise with variance $h$.  A formal proof is given in \citet{ChenDC:NIPS15}. Furthermore, $A_3$ is bounded by the following arguments:
\begin{align*}
	A_3 &=  \underbrace{\frac{h^{2}}{2L^2}\left(\sum_{t=1}^L\mathbb{E}\left[\tilde{\mathcal{L}}_t^2 \psi_{\beta_{t-1}}(\Xb_{t-1})\right]\right)^2}_{B_1} \\
	&\hspace{-1cm}+ \underbrace{\frac{h^{2}}{2L^2}\mathbb{E}\left(\sum_{t=1}^L\left(\tilde{\mathcal{L}}_t^2 \psi_{\beta_{t-1}}(\Xb_{t-1}) - \mathbb{E}\tilde{\mathcal{L}}_t^2 \psi_{\beta_{t-1}}(\Xb_{t-1})\right)\right)^2}_{B_2} \\
	&\lesssim B_1 + \left(\frac{h^2}{Lh}\sum_{t=1}^L\tilde{\mathcal{L}}_t^2 \psi_{\beta_{t-1}}(\Xb_{t-1})\right)^2 \\
	&+ \left(\frac{h^2}{Lh}\sum_{t=1}^L\left(\mathbb{E}\tilde{\mathcal{L}}_t^2 \psi_{\beta_{t-1}}(\Xb_{t-1})\right)\right)^2 \\
	&\leq O\left(\frac{1}{2L^2} + L^2h^2\right) + \frac{1}{Lh}\left(\frac{h^{2}}{L}\sum_{t=1}^L(\tilde{\mathcal{L}}_t^2 \psi(\Xb_{t-1}))^2\right) \\
	&+ O\left(\frac{1}{L^2h^2} + h^{4}\right) \\
	&= O\left(\frac{1}{Lh} + L^4\right)
\end{align*}

Collecting low order terms we have:
\begin{align}\label{eq:mse2}
	&\mathbb{E}\left(\frac{1}{L} \sum_{t=1}^L \left(\phi(\Xb_{t}) - \bar{\phi}_{\beta_t}\right)\right)^2 \nonumber\\
	=& O\left(\frac{\frac{1}{L}\sum_t\mathbb{E}\left\|\Delta V_t\right\|^2}{L} + \frac{1}{Lh} + h^{4}\right)~.
\end{align}

Finally, we have:
\begin{align*}
	&\mathbb{E}\left(\hat{\phi} - \bar{\phi}\right)^2 < \mathbb{E}\left(\frac{1}{L}\sum_t\left(\phi(\Xb_{t}) - \bar{\phi}_{\beta_t}\right)\right)^2 \\
	&+ \mathbb{E}\left(\frac{1}{L} \sum_{t=1}^L \left(\phi(\Xb_{t}) - \bar{\phi}_{\beta_t}\right)\right)^2 \\
	\leq& C\phi(\thetab^*)^2\left(\frac{1}{L}\sum_{t=1}^L\int_{\thetab \neq \thetab^*}e^{-\beta_t\hat{U}(\thetab)}\mathrm{d}\thetab\right)^2 \\
	&+ O\left(\frac{\frac{1}{L}\sum_t\mathbb{E}\left\|\Delta V_t\right\|^2}{L} + \frac{1}{Lh} + h^{4}\right) \\
	\leq& C\phi(\thetab^*)^2\left(\frac{1}{L}\sum_{t=1}^L\int_{\thetab \neq \thetab^*}e^{-\beta_t\hat{U}(\thetab)}\mathrm{d}\thetab\right)^2 \\
	&+ D \left(\frac{\frac{1}{L}\sum_t\mathbb{E}\left\|\Delta V_t\right\|^2}{L} + \frac{1}{Lh} + h^{4}\right)~.
\end{align*}

\end{proof}

\section{Proof of Corollary~\ref{coro:refine_con}}

\begin{proof}
The {\em refinement} stage corresponds to $\beta \rightarrow \infty$. We can prove that in this case,
the integration terms in the bias and MSE in Theorem~\ref{theo:bias1}
converge to 0.

To show this, define a sequence of functions $\{g_m\}$ as:
\begin{align}
	g_m \triangleq -\frac{1}{L}\sum_{l=m}^{L+m-1} e^{-\beta_l\hat{U}(\thetab)}~.
\end{align}
it is easy to see the sequence $\{g_m\}$ satisfies $g_{m_1} < g_{m_2}$ for $m_1 < m_2$, and $\lim_{m \rightarrow \infty} g_m = 0$. 
According to the monotone convergence theorem, we have
\begin{align*}
	\lim_{m \rightarrow \infty} \int g_m &\triangleq \lim_{m \rightarrow \infty} \int -\frac{1}{L}\sum_{l=m}^{L+m-1} e^{-\beta_l\hat{U}(\thetab)} \mathrm{d} \thetab \\
	&= \int \lim_{m \rightarrow \infty} g_m = 0~.
\end{align*}

As a result, the integration terms in the bounds for the bias and MSE vanish, leaving only the terms
stated in Corollary~\ref{coro:refine_con}. This completes the proof.
\end{proof}

\section{Reformulation of the Santa Algorithm}

In this section we give a version of the Santa algorithm
that matches better than our actual implementation,
shown in Algorithm~\ref{alg:sahmc1}--\ref{alg:refine_e}.

\begin{algorithm}[h]
\SetKwInOut{Input}{Input}
\caption{Santa}
\Input{$\eta_t$ (learning rate), $\sigma$, $\lambda$, $burnin$,
$\beta = \{\beta_1, \beta_2, \cdots\}\rightarrow \infty$,  $\{\zetab_t \in \RR^p\}\sim \mathcal{N}({\bf 0},\textbf{I}_p)$.}
Initialize $\thetab_{0}$, $\ub_{0} = \sqrt{\eta}\times\mathcal{N}(0, I)$, $\alphab_0 = \sqrt{\eta}C$, $\vb_0 = 0$ \;
\For {$t = 1, 2, \ldots $} {
Evaluate $\tilde{\fb}_t = \nabla_{\thetab} \tilde{U}_t(\thetab_{t-1})$ on the $t$-th minibatch \;
$\vb_t = \sigma \vb_{t-1} + \frac{1 - \sigma}{N^2}\tilde{\fb}_t\odot \tilde{\fb}_t$ \;
$\gb_t = 1\oslash\sqrt{\lambda + \sqrt{\vb_t}}$ \;
\uIf{$t < burnin$}{
\tcc{{\em exploration}}
$(\thetab_t, \ub_t, \alphab_t) = \mbox{Exploration\_S}(\thetab_{t-1}, \ub_{t-1}, \alphab_{t-1})$ 
\vspace{-0.1cm}\begin{center}or\end{center}\vspace{-0.2cm}\
$(\thetab_t, \ub_t, \alphab_t) = \mbox{Exploration\_E}(\thetab_{t-1}, \ub_{t-1}, \alphab_{t-1})$
}
\Else{
\tcc{{\em refinement}}
$(\thetab_t, \ub_t, \alphab_t) = \mbox{Refinement\_S}(\thetab_{t-1}, \ub_{t-1}, \alphab_{t-1})$
\vspace{-0.2cm}\begin{center}or\end{center}\vspace{-0.2cm}\
$(\thetab_t, \ub_t, \alphab_t) = \mbox{Refinement\_E}(\thetab_{t-1}, \ub_{t-1}, \alphab_{t-1})$
}
}
\label{alg:sahmc1}
\end{algorithm}

\begin{algorithm}[!ht]
\caption{$\mbox{Exploration\_S}~(\thetab_{t-1}, \ub_{t-1}, \alphab_{t-1})$}
$\thetab_{t} = \thetab_{t-1} + \gb_t \odot \ub_{t-1} / 2$\;
$\alphab_t = \alphab_{t-1} + \left(\ub_{t-1}\odot \ub_{t-1} - \eta/\beta_t\right) / 2$\;
$\ub_{t} = \exp\left(-\alphab_{t}/2\right)\odot \ub_{t-1}$\;
$\ub_{t} = \ub_{t} - \gb_t\odot \tilde{\fb}_t \eta + \sqrt{2\gb_{t-1}\eta^{3/2}/\beta_t}\odot \zetab_t$\;
$\ub_{t} = \exp\left(-\alphab_{t}/2\right)\odot \ub_{t}$\;
$\alphab_t = \alphab_{t} + \left(\ub_t\odot \ub_t - \eta/\beta_t\right) / 2$\;
$\thetab_{t} = \thetab_{t} + \gb_t\odot \ub_{t} / 2$\;
Return $(\thetab_t, \ub_t, \alphab_t)$
\label{alg:explore_s}
\end{algorithm}

\begin{algorithm}[!ht]
\caption{$\mbox{Refinement\_S}~(\thetab_{t-1}, \ub_{t-1}, \alphab_{t-1})$}
$\alphab_{t} = \alphab_{t-1}$\;
$\thetab_{t} = \thetab_{t-1} + \gb_t \odot \ub_{t-1} / 2$\;
$\ub_{t} = \exp\left(-\alphab_{t}/2\right)\odot \ub_{t-1}$\;
$\ub_{t} = \ub_{t} - \gb_t\odot \tilde{\fb}_t \eta$\;
$\ub_{t} = \exp\left(-\alphab_{t}/2\right)\odot \ub_{t}$\;
$\thetab_{t} = \thetab_{t} + \gb_t\odot \ub_{t} / 2$\;
Return $(\thetab_t, \ub_t, \alphab_t)$
\label{alg:refine_s}
\end{algorithm}

\begin{algorithm}[h]
\caption{$\mbox{Exploration\_E}~(\thetab_{t-1}, \ub_{t-1}, \alphab_{t-1})$}
$\alphab_{t} = \alphab_{t-1} + \left(\ub_{t-1}\odot \ub_{t-1} - \eta/\beta_t\right)$\;
$\ub_{t} = \left(1 - \alphab_{t}\right)\odot \ub_{t-1} - \eta \gb_t\odot \tilde{\fb}_t + \sqrt{2\gb_{t-1}\eta^{3/2}/\beta_t}\odot \zetab_t$\;
$\thetab_{t} = \thetab_{t} + \gb_t \odot \ub_{t}$\; 
Return $(\thetab_t, \ub_t, \alphab_t)$
\label{alg:explore_e}
\end{algorithm}

\begin{algorithm}[!ht]
\caption{$\mbox{Refinement\_E}~(\thetab_{t-1}, \ub_{t-1}, \alphab_{t-1})$}
$\alphab_{t} = \alphab_{t-1}$\;
$\ub_{t} = \left(1 - \alphab_{t}\right)\odot \ub_{t-1} - \eta \gb_t\odot \tilde{\fb}_t$\;
$\thetab_{t} = \thetab_{t} + \gb_t \odot \ub_{t}$\;
Return $(\thetab_t, \ub_t, \alphab_t)$
\label{alg:refine_e}
\end{algorithm}
\vspace{-2mm}

\vspace{-5mm}
\section{Relationship of \textit{refinement} Santa to Adam}
\label{sec:relate_adam}
\vspace{-3mm}
In the Adam algorithm (see Algorithm 1 of \citet{kingma2014adam}), the key steps are:
\begin{align*}
 \tilde{\fb}_t &\triangleq \nabla_{\thetab} \tilde{U}(\thetab_{t-1})\\
\vb_t &= \sigma \vb_{t-1} + (1 - \sigma)\tilde{\fb}_t\odot \tilde{\fb}_t\\
{\gb}_t &= 1\oslash\sqrt{\lambda + \sqrt{\vb_t}}\\
\tilde{\u}_t &= ({\bf1}-\bb_1)\odot {\tilde{\u}}_{t-1} + \bb_1\odot {\tilde{\fb}}_{t}\\
\thetab_t&=\thetab_t+\eta (\gb_t\odot \gb_t) \odot \tilde{\u}_t
\end{align*}
Here, we maintain the square root form of $\gb_t$, so the square is equivalent to the preconditioner used in Adam.  As well, in Adam, the vector $\bb_1$ is set to the same constant between 0 and 1 for all entries.  An equivalent formulation of this is:
\begin{align*}
 \tilde{\fb}_t &\triangleq \nabla_{\thetab} \tilde{U}(\thetab_{t-1})\\
\vb_t &= \sigma \vb_{t-1} + (1 - \sigma)\tilde{\fb}_t\odot \tilde{\fb}_t\\
{\gb}_t &= 1\oslash\sqrt{\lambda + \sqrt{\vb_t}}\\
{\u}_t &= ({\bf1}-\bb_1)\odot {{\u}}_{t-1} -\eta( \gb_t\odot \bb_1\odot {\tilde{\fb}}_{t})\\
\thetab_t&=\thetab_t- \gb_t \odot {\u}_t
\end{align*}
The only differences between these steps and the Euler integrator we present in our Algorithm \ref{alg:sahmc} are that our $\bb_1$ has a separate constant for each entry, and  the second term in $\u$ does not include the $\bb_1$ in our formulation.  If we modify our algorithm to multiply the gradient by $\bb_1$, then our algorithm, under the same assumptions as Adam, will have a similar regret bound of $O(\sqrt{T})$ for a convex problem. 

Because the focus of this paper is not on the regret bound, we only briefly discuss the changes in the theory.  We note that Lemma 10.4 from \citet{kingma2014adam} will hold with element-wise $\bb_1$.  
\begin{lemma}
Let $\gamma_i\triangleq \frac{b_{1,i}^2}{\sqrt{\sigma}}$. For $b_{1,i},\sigma\in [0,1)$ that satisfy $\frac{\beta_1^2}{\sqrt{\beta_2}} <1$ and bounded $\tilde{f}_t$, $||\tilde{f}_t||_2\leq G$, $||\tilde{f}_t||_\infty\leq G_\infty$, the following inequality holds
\begin{align*}
\sum_{t=1}^T \frac{u_i^2}{\sqrt{t g_i^2}}\leq\frac{2}{1-\gamma_i}||\tilde{f}_{1:T,i}||_2
\end{align*}
which contains an element-dependent $\gamma_i$ compared to Adam.
\end{lemma}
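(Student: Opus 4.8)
The plan is to mirror the proof of Lemma~10.4 in \citet{kingma2014adam}, changing only that the scalar first--moment decay $\beta_1$ is everywhere replaced by the coordinate--dependent weight $b_{1,i}$. The essential point is that that proof is carried out one coordinate at a time and never uses the fact that the momentum weight is shared across coordinates: fixing an index $i$, one unrolls the recursions into exponentially--weighted sums of the scalar gradient history,
\begin{align*}
	u_{t,i} = (1-b_{1,i})\sum_{k=1}^{t} b_{1,i}^{\,t-k}\,\tilde{f}_{k,i},
	\qquad
	v_{t,i} = (1-\sigma)\sum_{k=1}^{t} \sigma^{\,t-k}\,\tilde{f}_{k,i}^{2},
\end{align*}
and then estimates $\sum_{t} u_{t,i}^{2} g_{t,i}^{2}/\sqrt{t}$ (using $g_{t,i}^{2} \le v_{t,i}^{-1/2}$, discarding the nonnegative $\lambda$) purely in terms of these two scalar sequences.

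Concretely I would (i) bound the numerator by Cauchy--Schwarz against the weights $b_{1,i}^{\,t-k}$, so that $u_{t,i}^{2}$ is controlled by a $b_{1,i}$--weighted sum of $\tilde{f}_{k,i}^{2}$; (ii) lower bound the preconditioner by isolating single summands of $v_{t,i}$, i.e.\ $\sqrt{v_{t,i}} \ge \sqrt{1-\sigma}\,\sigma^{(t-k)/2}\,|\tilde{f}_{k,i}|$; (iii) combine (i)--(ii) so that the cross terms assemble into exponentially--decaying sums whose common ratio is $\gamma_i = b_{1,i}^{2}/\sqrt{\sigma}$ --- this is the step, and essentially the only step, that invokes the hypothesis $\gamma_i < 1$; and (iv) sum over $t$, exchange the order of summation, collapse the geometric factors into $1/(1-\gamma_i)$, and apply one final Cauchy--Schwarz to recognize $\|\tilde{f}_{1:T,i}\|_{2}$, picking up the constant $2$ exactly as in \citet{kingma2014adam}. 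Each of these manipulations is line-for-line the corresponding step in the Adam proof with $b_{1,i}$ in place of $\beta_1$; the bounded--gradient hypotheses $\|\tilde{f}_t\|_2 \le G$, $\|\tilde{f}_t\|_\infty \le G_\infty$ are needed only to absorb the boundary and initialization terms, again exactly as there.

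The only genuine subtlety --- and the thing one has to be most careful to state --- is that the contraction condition must be imposed coordinate by coordinate: one assumes $b_{1,i}^{2} < \sqrt{\sigma}$ for \emph{every} $i$ (equivalently $\max_i b_{1,i} < \sigma^{1/4}$), whereas in Adam this is automatic because $\beta_1$ is a single number. Under this assumption the per--coordinate argument above goes through uniformly in $i$, and stitching the coordinate--$i$ bounds together yields the claimed inequality with the element--dependent ratio $\gamma_i$ replacing Adam's global $\gamma$. I expect no serious obstacle beyond this bookkeeping, since the statement is, deliberately, only a coordinate--wise reorganization of an existing lemma.
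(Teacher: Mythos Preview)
Your proposal is correct and matches the paper's own approach exactly: the paper does not supply an independent proof but simply asserts that Lemma~10.4 of \citet{kingma2014adam} goes through verbatim once the global $\beta_1$ is replaced by the coordinate-dependent $b_{1,i}$, since the original argument is already carried out one coordinate at a time. Your write-up is in fact more detailed than the paper's, which only states the conclusion and the substitution $\gamma \mapsto \gamma_i = b_{1,i}^2/\sqrt{\sigma}$.
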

Theorem 10.5 of \citet{kingma2014adam} will hold with the same modifications and assumptions for a $\bb$ with distinct entries; the proof in \citet{kingma2014adam} is already element-wise, so it suffices to replace their global parameter $\gamma$ with distinct $\gamma_i\triangleq \frac{b_{1,i}^2}{\sqrt{\sigma}}$.  This will give a regret of $O(\sqrt{T})$, the same as Adam.


%

\vspace{-0mm}
\section{Additional Results}\label{supp:additional_results}
\vspace{-0mm}
\begin{figure}[h]
	\centering
	\includegraphics[width=0.8\linewidth]{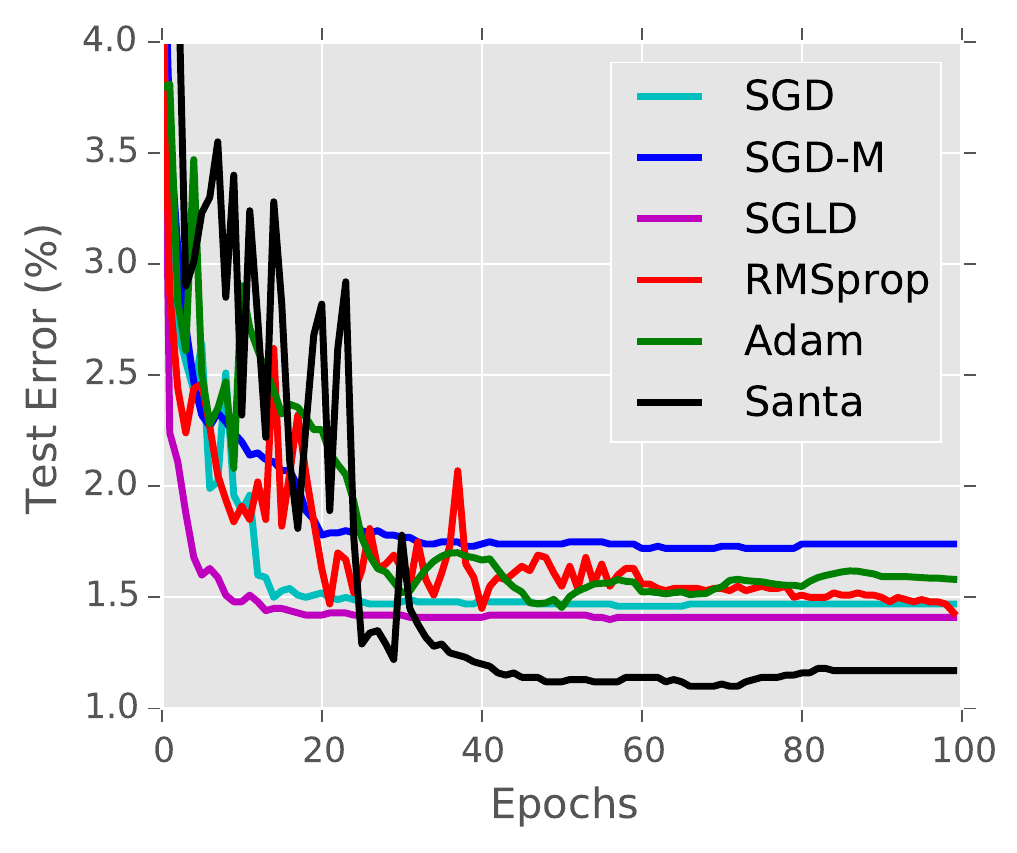}
	\vspace{-2mm}
	\caption{MNIST using FNN with size of 800.}
	\label{fig:fnn_800}
\end{figure}
Learning curves of different algorithms on MNIST using FNN with size of 800 are plotted in Figure~\ref{fig:fnn_800}. Learning curves of different algorithms on four polyphonic music datasets using RNN are shown in Figure~\ref{fig:rnn_add}.

We additionally test Santa on the ImageNet dataset. We use the GoogleNet architecture, which is a 22 layer deep model.
We use the default setting defined in the Caffe package\footnote{$\href{https://github.com/cchangyou/Santa/tree/master/caffe/models/bvlc_googlenet}{https://github.com/cchangyou/Santa/tree/master/caffe/models/bvlc\_googlenet}$}. We were not able to make other stochastic optimization algorithms
except SGD with momentum and the proposed Santa work on this dataset. Figure~\ref{fig:imagenet} shows the comparison 
on this dataset. We did not tune the parameter setting, note the default setting is favourable by SGD with momentum. 
Nevertheless, Santa still significantly outperforms SGD with momentum in term of convergence speed.

\begin{figure}[h]
	\centering
	\includegraphics[width=\linewidth]{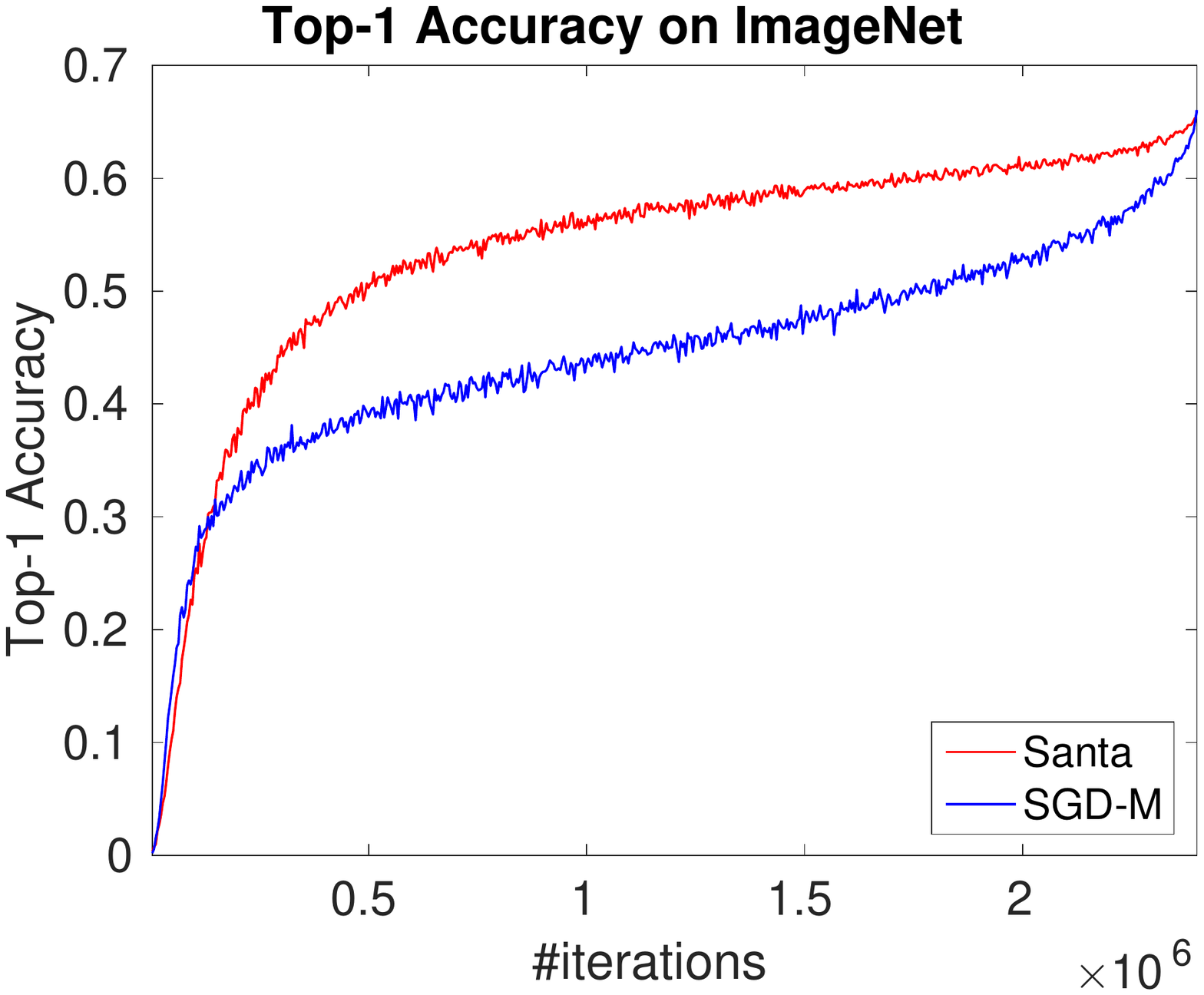}
	\includegraphics[width=\linewidth]{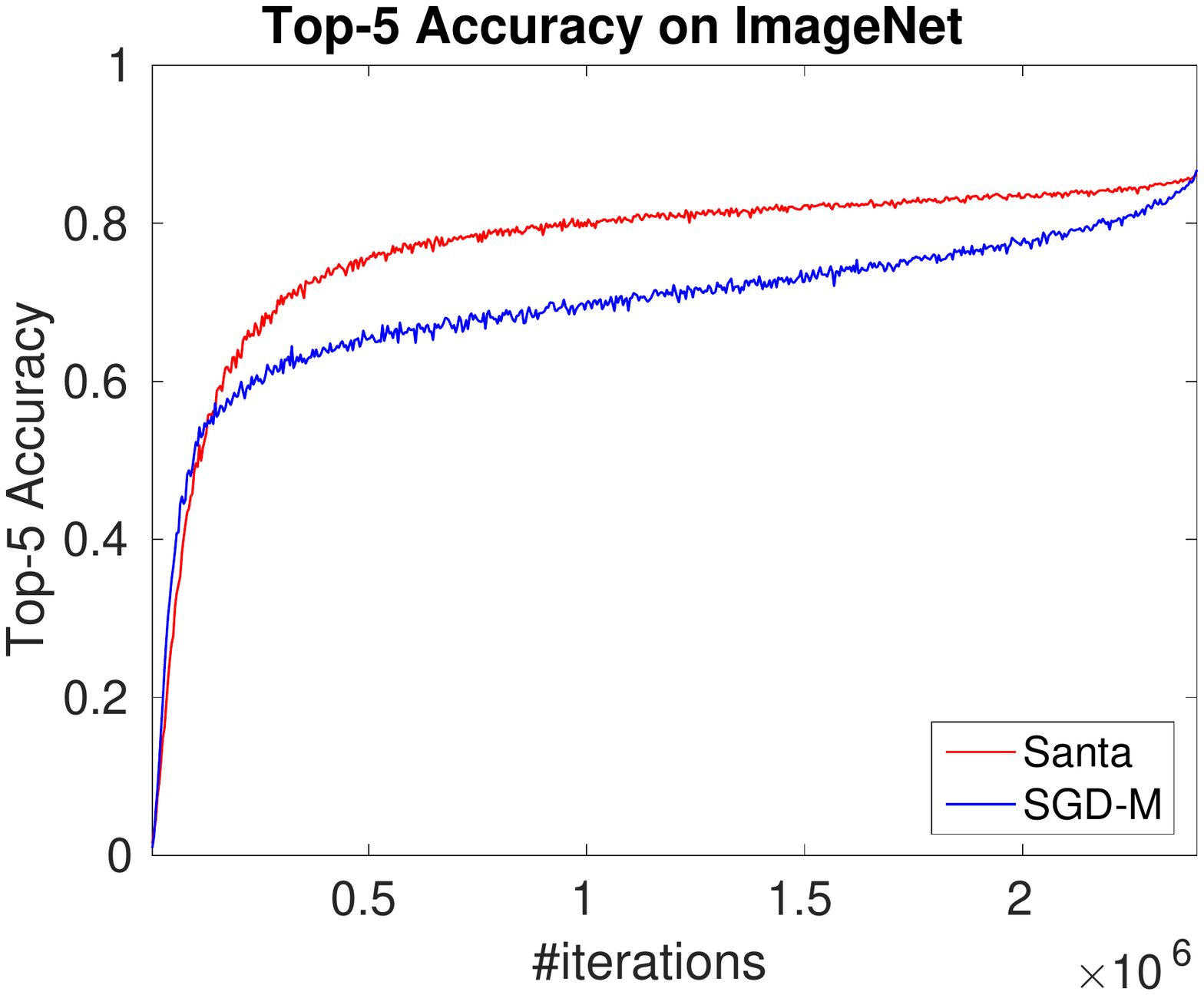}
	\caption{Santa vs. SGD with momentum on ImageNet. We used ImageNet11 for training.}
	\label{fig:imagenet}
\end{figure}

\begin{figure*}[h]
	\centering
    \includegraphics[width=0.32\linewidth]{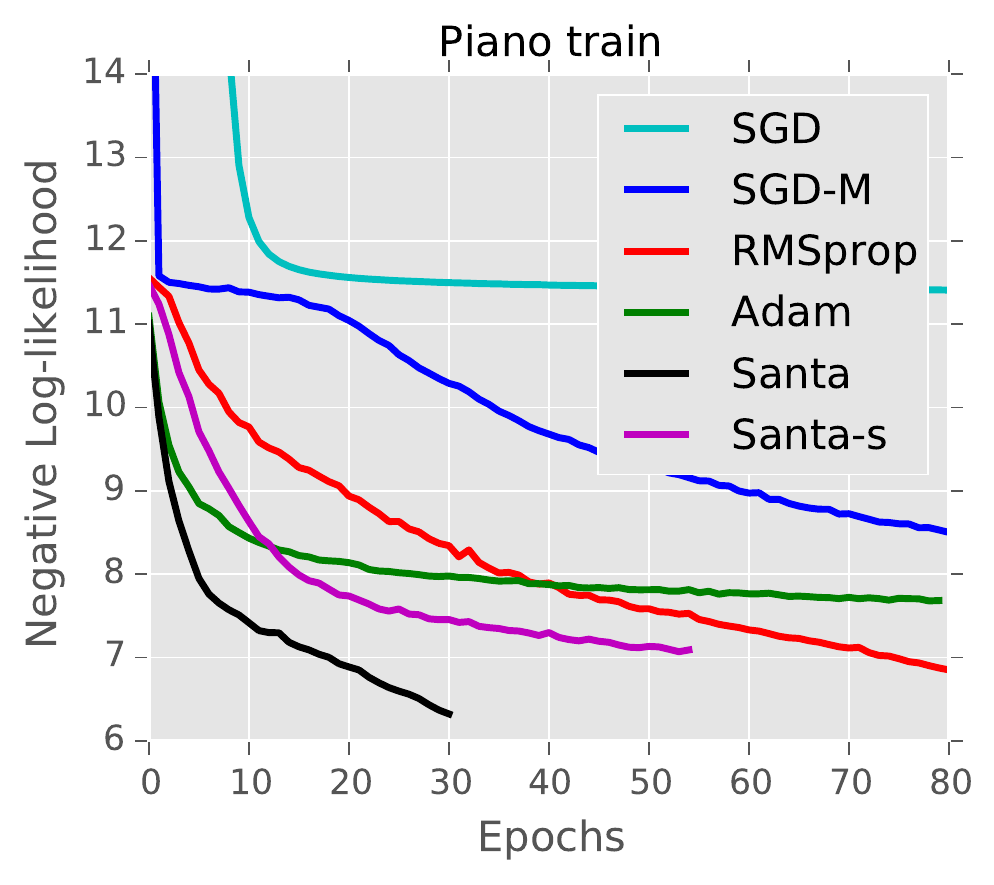}
    \includegraphics[width=0.32\linewidth]{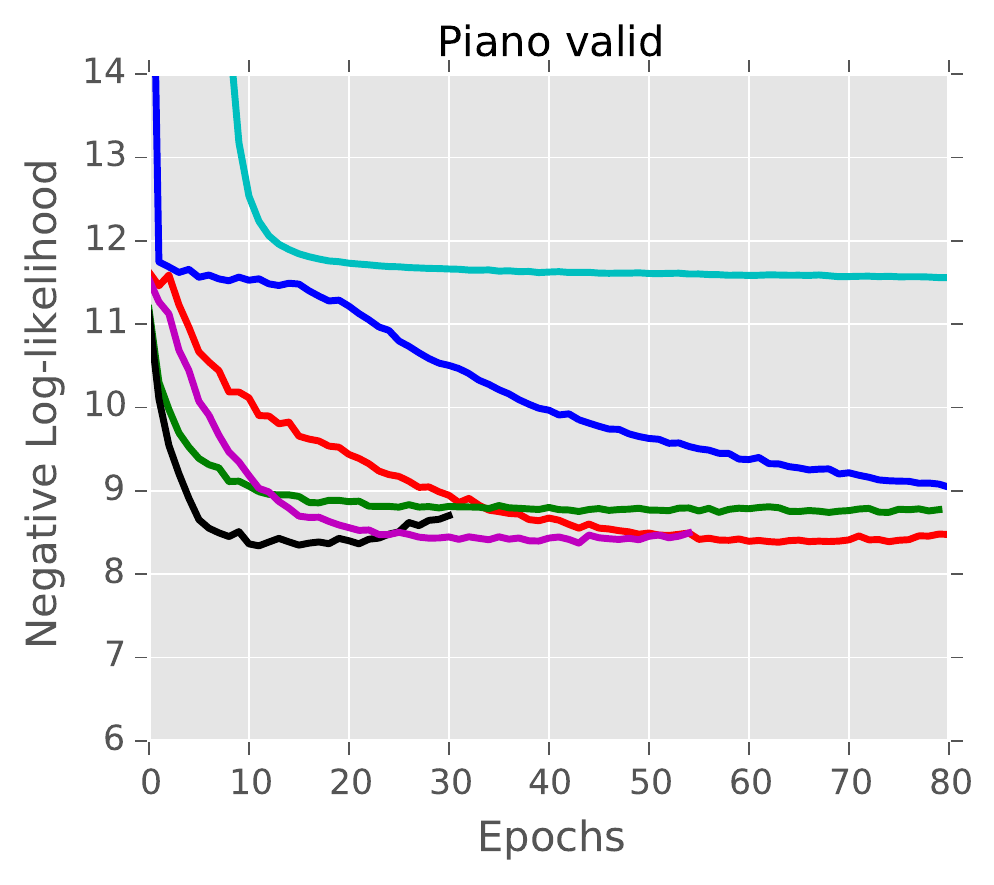}
    \includegraphics[width=0.32\linewidth]{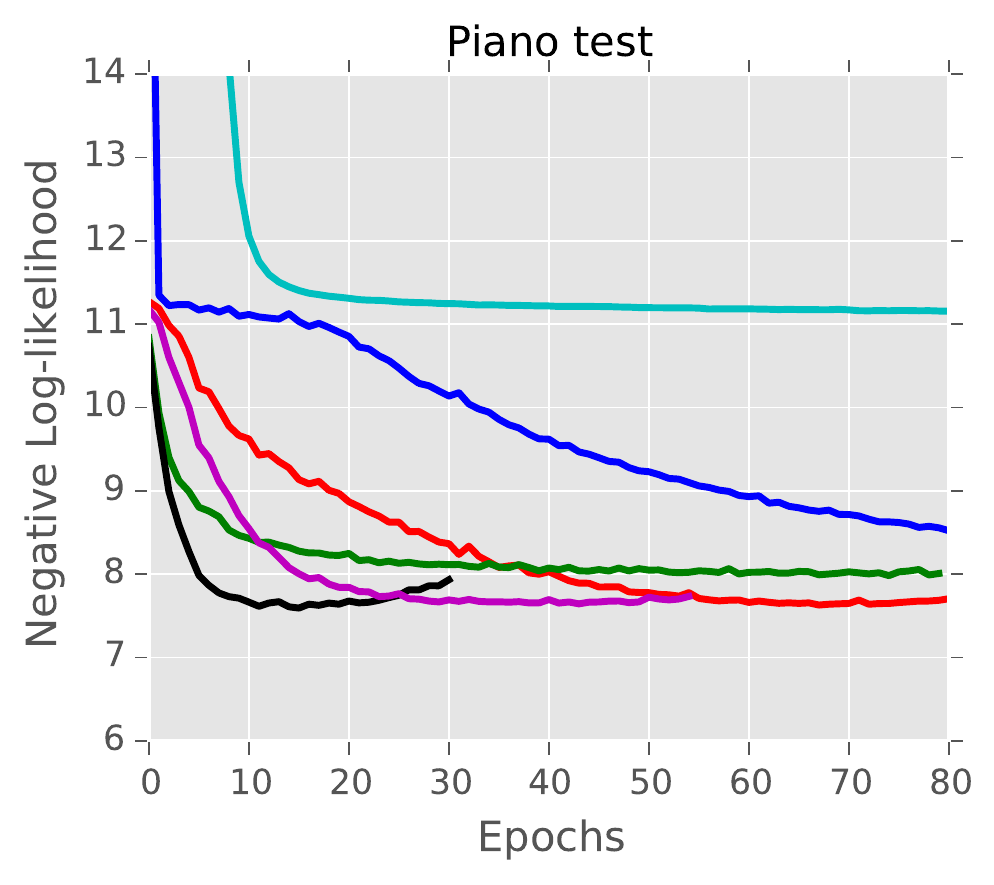} \\
    \includegraphics[width=0.32\linewidth]{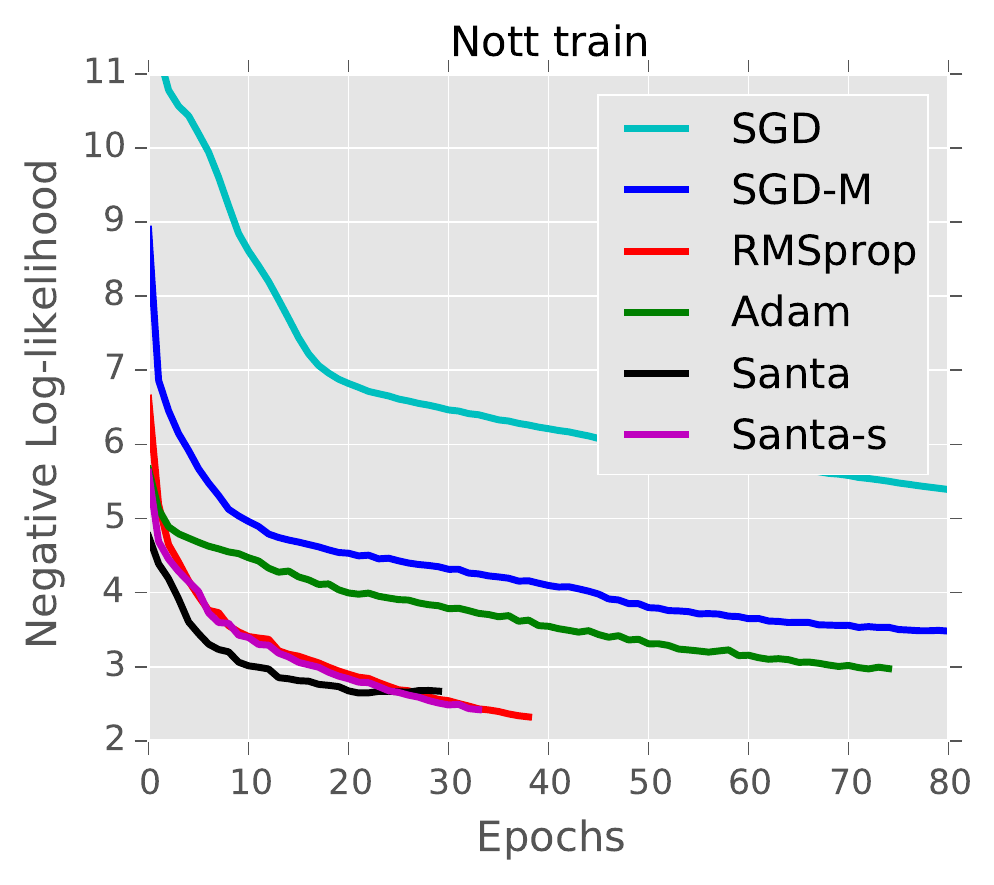}
        \includegraphics[width=0.32\linewidth]{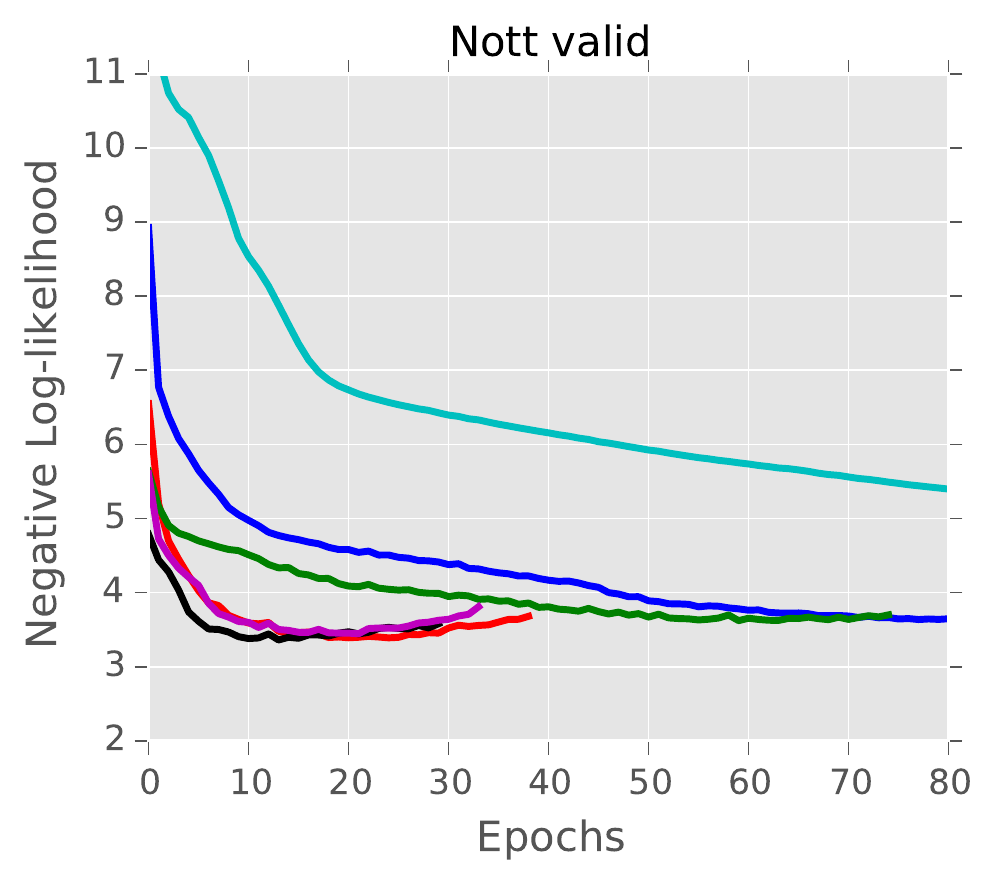}
        \includegraphics[width=0.32\linewidth]{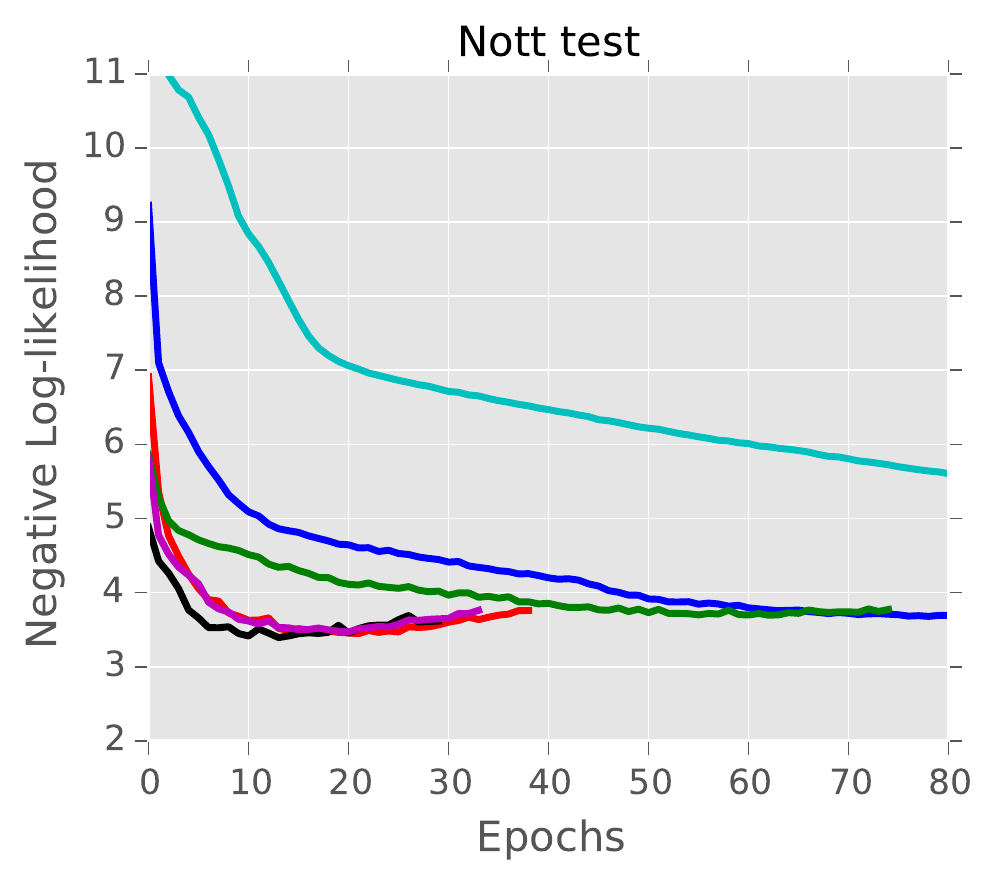} \\
        \includegraphics[width=0.32\linewidth]{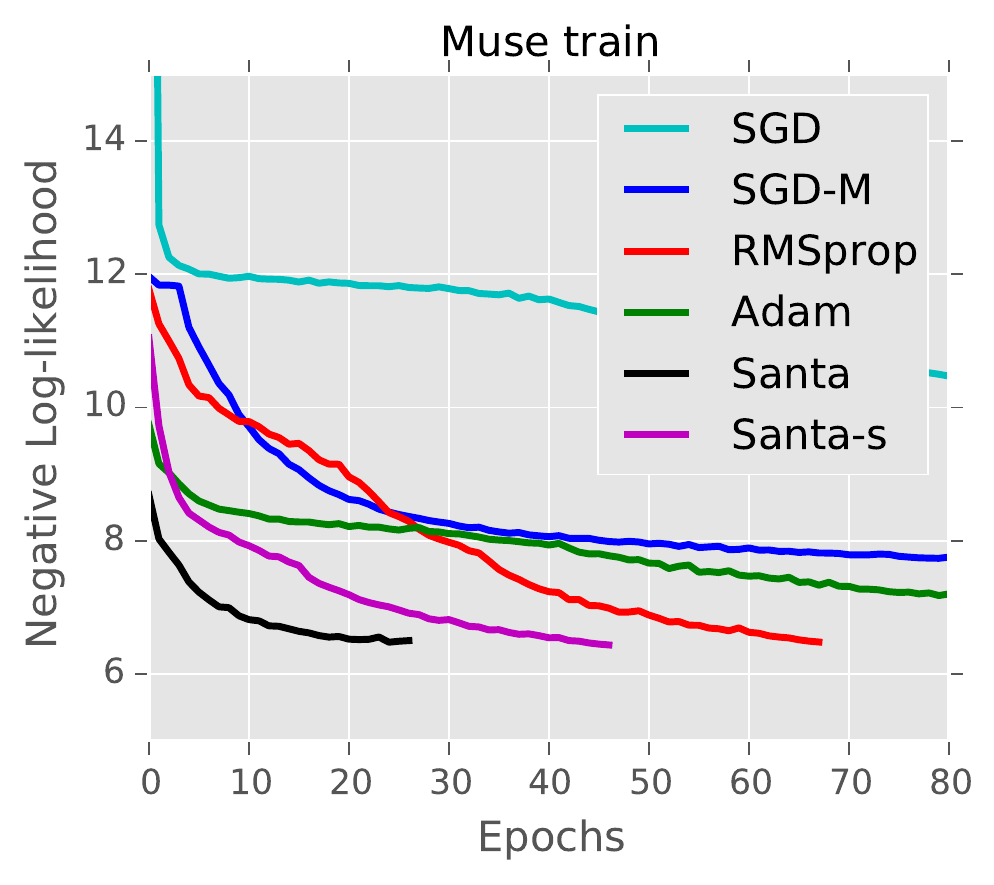}
            \includegraphics[width=0.32\linewidth]{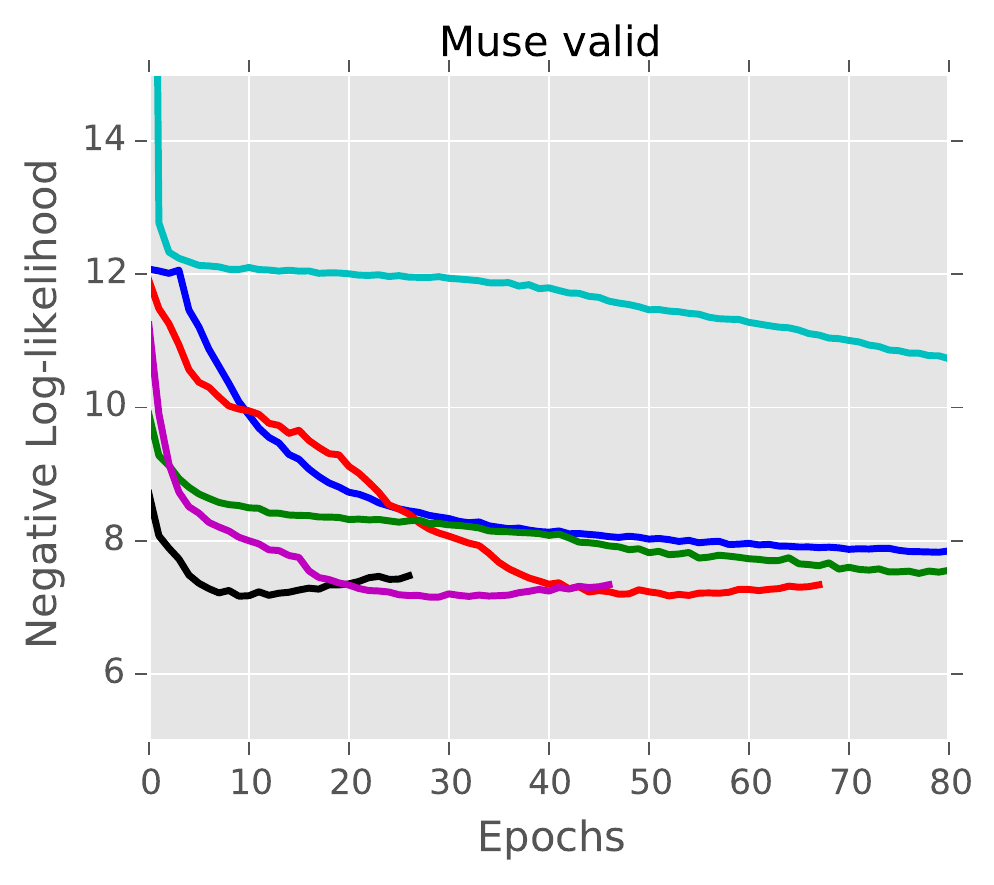}
            \includegraphics[width=0.32\linewidth]{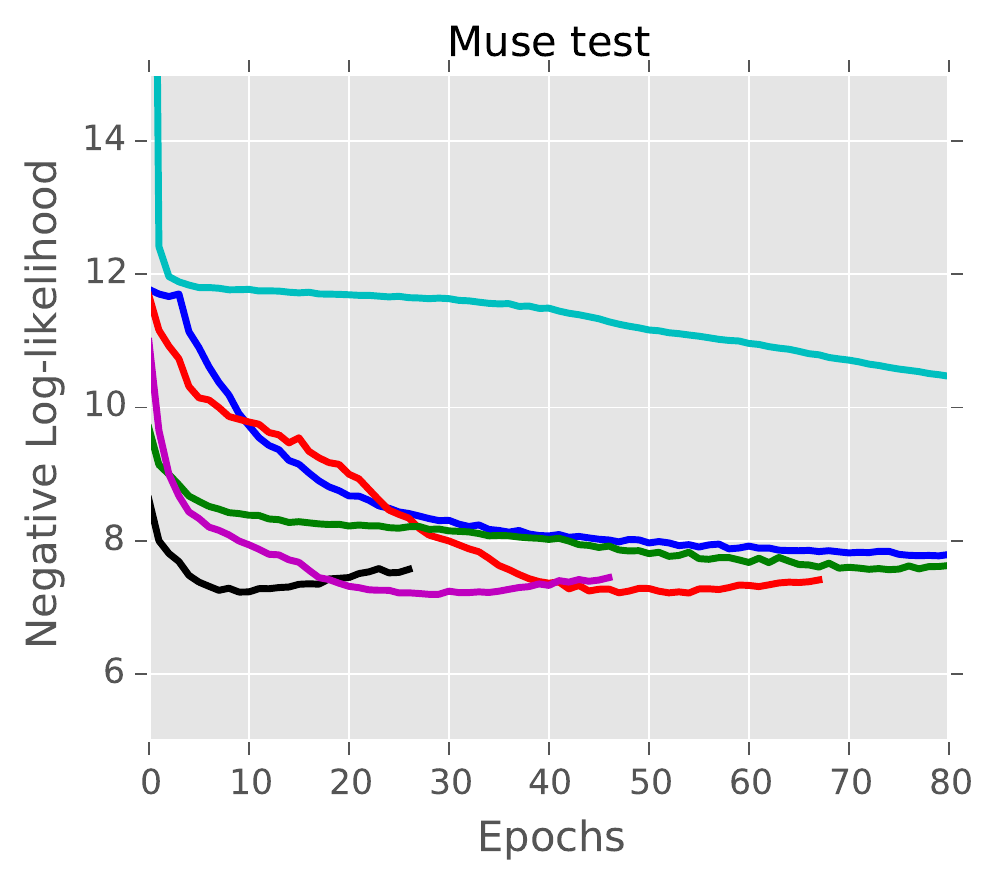} \\
             \includegraphics[width=0.32\linewidth]{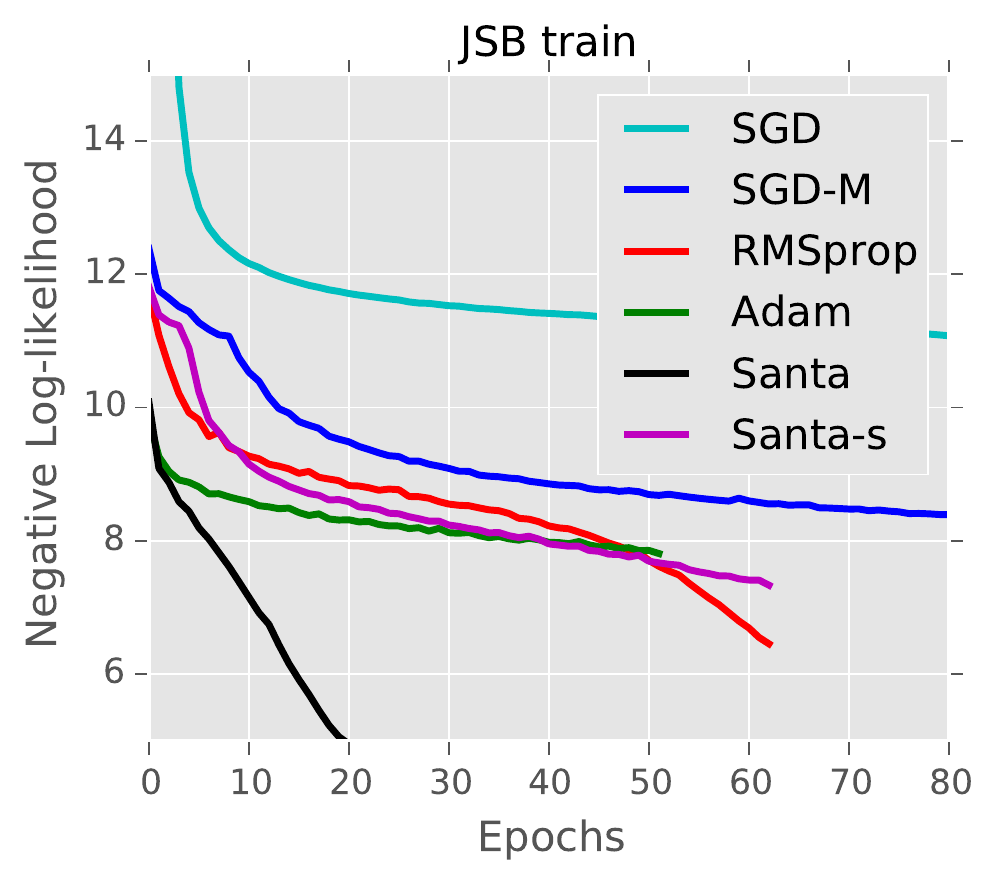}
                \includegraphics[width=0.32\linewidth]{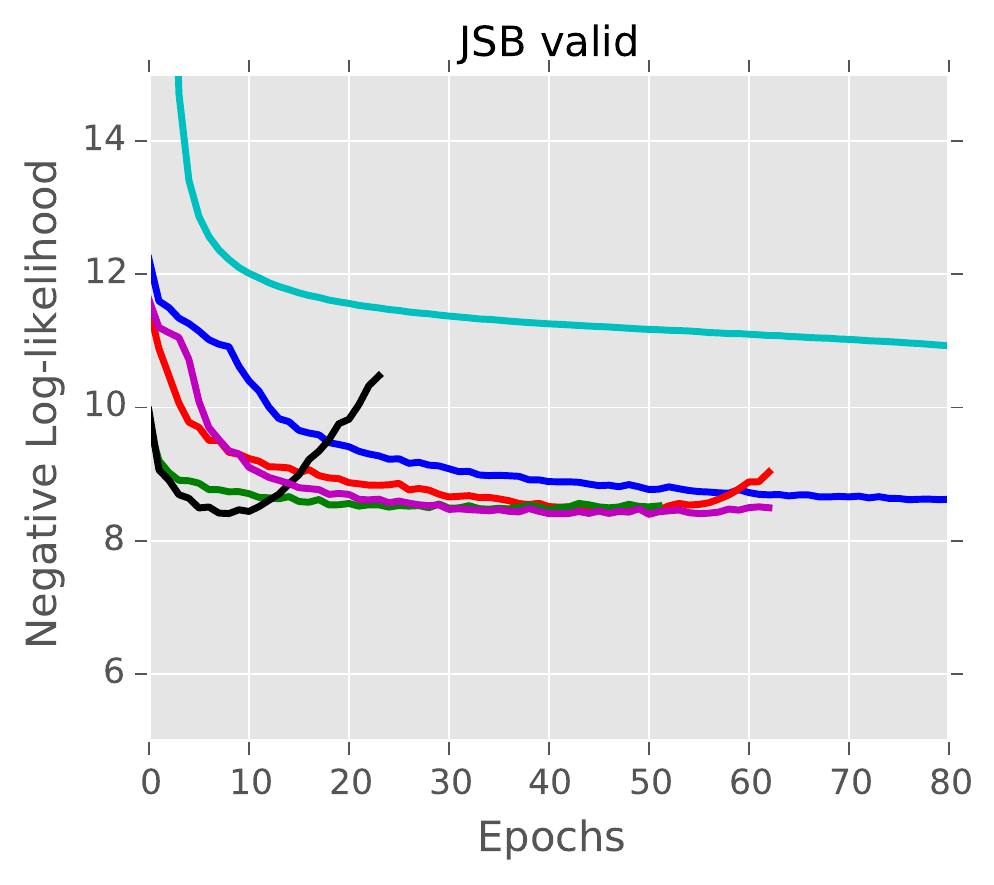}
                \includegraphics[width=0.32\linewidth]{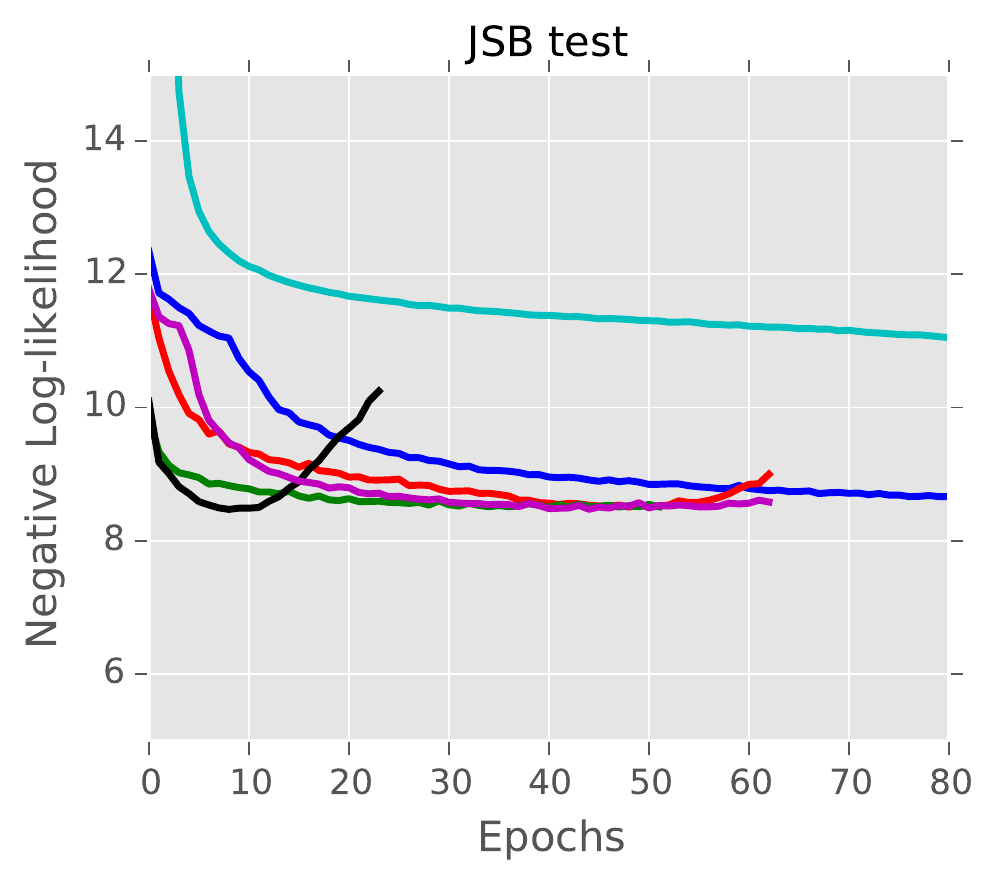}
	\caption{Learning curves of different algorithms on four polyphonic music datasets using RNN.}
	\label{fig:rnn_add}
\end{figure*}

\end{document}